\documentclass[runningheads]{llncs}

\usepackage{latexsym}
\usepackage{graphicx}
\usepackage{xcolor}
\usepackage[utf8]{inputenc}
\usepackage{amsmath, amssymb}
\usepackage{pifont}
\usepackage{todonotes}
\usepackage{tikz}
\usetikzlibrary{fit}
\usetikzlibrary{shapes, arrows, positioning, calc, decorations.pathreplacing}
\usetikzlibrary{shapes, arrows.meta, positioning, calc, fit}
\usepackage[pdfborder=0]{hyperref}

\newcommand{\AF}{\mathcal{F}}
\newcommand{\adm}{\mathit{adm}}
\newcommand{\pref}{\mathit{pref}}
\newcommand{\grd}{\mathit{grd}}
\newcommand{\stb}{\mathit{stb}}
\newcommand{\Att}{\mathit{Att}}
\newcommand{\Cred}{\text{Cred}}
\newcommand{\Skep}{\text{Skep}}
\newcommand{\Uni}{\text{Uni}}
\newcommand{\NE}{\text{NE}}
\newcommand{\Ex}{\text{Exists}}
\newcommand{\UniPref}{\Uni_{\pref}}
\definecolor{coloranto}{rgb}{0.85,0.5,0.1}

\title{Complexity of Grounded Semantics and Preferred Semantics in Finitary Argumentation Frameworks}
\titlerunning{Complexity of Grounded and Preferred Semantics in Finitary AFs}

\author{Xu Jinfan \and Luo Jieting}
\authorrunning{Xu Jinfan and Luo Jieting}

\institute{School of Philosophy, Zhejiang University}

\begin{document}
\maketitle

\begin{abstract}
Abstract argumentation frameworks (AFs) introduced by Dung provide a formal foundation for non-monotonic reasoning in artificial intelligence. While decision problems for general infinite AFs typically reside at high levels of the analytical hierarchy ($\Sigma_1^1$ or $\Pi_1^1$), restricting the framework to be computably finitary reduces some of the complexity to the arithmetical hierarchy. In this paper, we present a complexity mapping of grounded and preferred semantics in computably finitary AFs across standard decision problems: credulous acceptance ($\Cred$), skeptical acceptance ($\Skep$), extension existence ($\Ex$), uniqueness ($\Uni$), and non-empty existence ($\NE$). For grounded semantics, credulous and skeptical acceptance are already known to be $\Sigma_1^0$-complete. We show that non-empty existence is also $\Sigma_1^0$-complete, whereas existence and uniqueness are trivial. These classifications are understood within the domain of valid computably finitary representations. For preferred semantics, using a computably finitely branching computation tree, $\Cred_{\pref}$ is shown to be in  $\Pi_1^0$-c and $\NE_{\pref}$ is $\Sigma_2^0$-c. However, it is insufficient to reduce universal quantification and global uniqueness, leaving $\Skep_{\pref}$ in $\Pi_1^1$ and $\UniPref$ in $\Sigma_2^1$-c. Our results show the precise boundary where finitarity succeeds to bring reasoning down to the arithmetical hierarchy and where second-order quantification forces problems back into the analytical hierarchy.

\keywords{Abstract Argumentation \and Finitary AFs \and Computability Theory}
\end{abstract}

\section{Introduction}

Abstract Argumentation Frameworks (AFs), introduced by Dung~\cite{dung1995}, have become a fundamental part of formal reasoning in artificial intelligence. While the computational properties of finite AFs are well-understood, recent years have witnessed a surge of interest in infinite argumentation frameworks (IAF). Infinite AFs enable the formal modeling of domains involving infinite events, continuous temporal reasoning, or infinite debate games; however, they introduce substantial computational challenges. As shown by Andrews and San Mauro ~\cite{andrews2024infinite}, standard reasoning tasks in general infinite AFs often reside at the upper levels of the analytical hierarchy (e.g., being $\Pi_1^1$- or $\Sigma_1^1$-complete), rendering them algorithmically intractable in the general case.

Nevertheless, many realistic infinite scenarios satisfy a natural structural constraint known as the \textit{finitary} restriction: although the total pool of arguments may be infinite, each individual argument is attacked by at most finitely many predecessors. This structural property provides a powerful graphical tool. In finitary AFs, the defense space of an argument can be modeled as a finitely branching tree. By exploiting classical logical tools, such as the $\omega$-continuity of Dung's characteristic operator and König's Lemma, the computational complexity of several core semantics drops from the analytical hierarchy down to the arithmetical hierarchy.

Andrews and San Mauro have already established upper and lower bounds for credulous and skeptical acceptance in computably finitary AFs~\cite{andrews2025complexity}. In particular, grounded acceptance is $\Sigma_1^0$-complete within the domain of valid representations. Building on these results, this paper studies grounded and preferred semantics across five decision problems. For grounded semantics, we recall the acceptance classification and prove that non-empty existence ($\NE_{\grd}^{\mathrm{cfin}}$) is $\Sigma_1^0$-complete; existence and uniqueness are trivial. For preferred semantics, we use the correspondence between admissible sets and preferred extensions to obtain $\Pi_1^0$-completeness of credulous acceptance and $\Sigma_2^0$-completeness of non-empty existence. We also study the analytical complexity of skeptical acceptance and uniqueness under preferred semantics.

\section{Computability Theoretic Background}
In this section, we present the necessary foundations of recursion theory and descriptive complexity used to classify decision problems on infinite structures. We assume all elements of our domain can be effectively encoded as natural numbers $\omega$.

\subsection{Numbers, Strings, and Trees}
To formulate our problems as subsets of $\omega$, it is convenient to encode pairs of numbers into single numbers. For this purpose, we fix a computable bijection $p : \omega \times \omega \to \omega$ and adopt the standard notation $\langle x,y\rangle$ for $p(x,y)$. Furthermore, given a finite set $A=\{x_1,x_2,\ldots,x_n\}\subseteq\omega$, its canonical index is defined as
\[
y=2^{x_1}+2^{x_2}+\cdots+2^{x_n}.
\]
We let $D_y$ denote the finite set whose canonical index is $y$. By leveraging canonical indices, we can quantify over all finite subsets of an argumentation framework.

We denote the set of all finite strings of natural numbers by $\omega^{<\omega}$, and the set of all infinite strings by $\omega^{\omega}$. A string $\sigma$ is called a prefix of $\tau$, written as $\sigma\preceq\tau$, if $\tau$ extends $\sigma$. If neither $\sigma\preceq\tau$ nor $\tau\preceq\sigma$ holds, $\sigma$ and $\tau$ are said to be incomparable. The restriction of an infinite string $\pi \in \omega^{\omega}$ to its first $n$ bits is denoted by $\pi \upharpoonright_n$.

A tree is a set $T\subseteq\omega^{<\omega}$ that is closed under prefixes. An infinite string $\pi\in\omega^{\omega}$ is called a path through $T$ if every finite prefix of $\pi$ belongs to $T$ (i.e., $\sigma\in T$ for all $\sigma\prec\pi$, or $\pi \upharpoonright_n \in T$ for all numbers $n$). We denote the set of all paths through $T$ by $[T]$. A tree $T$ is said to have infinite height if it contains strings of arbitrarily large finite lengths.


\subsection{The Arithmetical Hierarchy: Quantifying over Numbers}

The \textit{arithmetical hierarchy} classifies subsets of $\omega^k$ based on the number of alternating first-order quantifiers ($\exists, \forall$) required to define them over a computable predicate.

Let $R \subseteq \omega^{k+n}$ be a recursive (computable) relation. 
\begin{itemize}
    \item A set $S \subseteq \omega^k$ is in the class $\Sigma_n^0$ if there exists a recursive relation $R$ such that:
    \[ S = \{ \vec{x} \mid \exists y_1 \forall y_2 \exists y_3 \dots Q y_n R(\vec{x}, y_1, \dots, y_n) \} \]
    where the quantifier string contains $n$ alternating quantifiers, and $Q$ is $\exists$ if $n$ is odd and $\forall$ if $n$ is even.
    
    \item A set $S \subseteq \omega^k$ is in the class $\Pi_n^0$ if its complement is in $\Sigma_n^0$. Formally:
    \[ S = \{ \vec{x} \mid \forall y_1 \exists y_2 \forall y_3 \dots Q y_n R(\vec{x}, y_1, \dots, y_n) \} \]
    where $Q$ is $\forall$ if $n$ is odd and $\exists$ if $n$ is even.
    
    \item The class $\Delta_n^0$ is defined as the intersection $\Sigma_n^0 \cap \Pi_n^0$, representing properties that are structurally equivalent to both formats at level $n$.
\end{itemize}

\subsection{The Analytical Hierarchy: Quantifying over Sets}

When the definition of a problem requires quantifying over \text{all possible infinite sets} or \text{functions}, the arithmetical hierarchy is no longer sufficient to classify its complexity. In these instances, we move into the \text{analytical hierarchy}. The fundamental distinction between these hierarchies lies in the domain of quantification. While the arithmetical hierarchy is restricted to quantification over individuals (first-order quantification, e.g., $\exists n \in \omega$), the analytical hierarchy allows for quantification over subsets of individuals (second-order quantification, e.g., $\exists S \subseteq \omega$).

\begin{definition}[Analytical Hierarchy]
A set $S \subseteq \omega^k$ belongs to the class $\Sigma_n^1$ or $\Pi_n^1$ if it can be defined by a second-order formula of the form:
\[ S = \{ \vec{x} \mid (Q_1 X_1)(Q_2 X_2) \dots (Q_n X_n) \Phi(\vec{x}, X_1, \dots, X_n) \} \]
where:
\begin{itemize}
    \item $X_1, X_2, \dots, X_n$ are second-order variables ranging over subsets of $\omega$ (or functions from $\omega$ to $\omega$).
    \item $Q_1, Q_2, \dots, Q_n$ are alternating second-order quantifiers ($\exists^1, \forall^1$).
    \item $\Phi$ is an arithmetical formula containing only first-order quantifiers.
\end{itemize}
A set is $\Sigma_n^1$ if $Q_1 = \exists^1$, and it is $\Pi_n^1$ if $Q_1 = \forall^1$.
\end{definition}

\section{Argumentation Theoretic Background}

\subsection{Abstract Argumentation}

An \textit{argumentation framework} (AF) is a pair $\AF = \langle A, R \rangle$, where $A$ is a set of arguments and $R \subseteq A \times A$ is a binary relation representing attacks. For $a, b \in A$, we say $a$ attacks $b$ if $(a, b) \in R$. A set $S \subseteq A$ is \textit{conflict-free} if there are no $a, b \in S$ such that $(a, b) \in R$.  A set $S \subseteq A$ \textit{defends} an argument $a \in A$ if for every $b \in A$ that attacks $a$, there exists some $c \in S$ that attacks $b$.

\begin{definition}[Characteristic Function]
    The \textit{characteristic function} $\Gamma_{\AF} : 2^A \to 2^A$ is defined as $\Gamma_{\AF}(S) = \{ a \in A \mid S \text{ defends } a \}$.
\end{definition}

\subsection{Argumentation Semantics}

In the extension-based approach to argumentation, a semantics $\sigma$ associates with every AF $\AF = \langle A, R \rangle$ a set of extensions $\sigma(\AF) \subseteq 2^A$. We follow the standard definitions introduced by Dung.

\begin{definition}[Admissible-based Semantics]
Let $\AF = \langle A, R \rangle$ be an AF and $S \subseteq A$ be a conflict-free set.
\begin{itemize}
    \item $S$ is \text{admissible} ($S \in \adm(\AF)$) if $S$ defends all its elements, i.e., $S \subseteq \Gamma_{\AF}(S)$.
    \item $S$ is a \text{complete} extension ($S \in \mathit{com}(\AF)$) if it is a fixed point of the characteristic function, i.e., $S = \Gamma_{\AF}(S)$.
    \item $S$ is the \text{grounded} extension ($S = \grd(\AF)$) if it is the least fixed point of $\Gamma_{\AF}$.
    \item $S$ is a \text{preferred} extension ($S \in \pref(\AF)$) if it is a maximal (w.r.t. $\subseteq$) admissible set.

\end{itemize}
\end{definition}

While the existence of preferred and grounded extensions is guaranteed in any AF (including infinite ones) by Zorn's Lemma and the Knaster-Tarski theorem respectively, the complexity of identifying these sets varies significantly depending on the graph's topology.

\subsection{Computably Finitary AFs and Computational Problems}

We focus on \textit{computably finitary} AFs, which are infinite structures where the local interaction is effectively bounded.

\begin{definition}[Finitary AF]
An AF $\AF = \langle A, R \rangle$ is \textit{finitary} if for every $a \in A$, the set of its attackers $\Att(a) = \{ b \in A \mid (b, a) \in R \}$ is finite. 
\end{definition}

\begin{definition}[Computably Finitary AF]
An argumentation framework $\AF = \langle A, R \rangle$, where $A$ is the set of arguments and $R \subseteq A \times A$ is the attack relation, is said to be computably finitary if it satisfies the following two conditions ~\cite{andrews2025complexity}:

\begin{enumerate}
    \item \text{Recursive Set of Arguments}: The set $A$ is a \textit{recursive set}. Formally, a set $A \subseteq \omega$ is recursive (or decidable) if its characteristic function $\chi_A: \omega \to \{0, 1\}$ is a total computable function, defined as:
    \[ 
    \chi_A(n) = \begin{cases} 1 & \text{if } n \in A \\ 0 & \text{if } n \notin A \end{cases}
    \]
    In the context of infinite argumentation, $A$ is typically identified with the set of natural numbers $\omega$.

    \item \text{Computable Attackers}: There exists a total computable function $f: A \to \omega$ such that for every argument $a \in A$, $f(a)$ returns the \textit{canonical index} of the finite set of its attackers, $\Att(a) = \{ b \in A \mid (b, a) \in R \}$. 
\end{enumerate}
\end{definition}

Following the methodology of Andrews and San Mauro ~\cite{andrews2024infinite}, we analyze five fundamental decision problems for each semantics $\sigma$:
\begin{enumerate}
    \item \text{Credulous Acceptance} ($\Cred_{\sigma}$): Given $\AF$ and $x \in A$, is $x$ contained in at least one $S \in \sigma(\AF)$?
    \item \text{Skeptical Acceptance} ($\Skep_{\sigma}$): Given $\AF$ and $x \in A$, is $x$ contained in every $S \in \sigma(\AF)$?
    \item \text{Existence} ($\Ex_{\sigma}$): Does there exist a set $S \in \sigma(\AF)$? (Note: for grounded/preferred, this is trivial).
     \item \text{Non-empty Existence} ($\text{NE}_\sigma$): Does there exist a non-empty set $S \in \sigma(\AF)$? 
    \item \text{Uniqueness} ($\Uni_{\sigma}$): Is there exactly one set $S \in \sigma(\AF)$?
\end{enumerate}
For the computably finitary representation, fix a standard enumeration $(\varphi_e)_{e\in\omega}$ of the partial computable functions and let
\[
\mathrm{Tot}=\{e\in\omega\mid \varphi_e\text{ is total}\}.
\]
For each $e\in\mathrm{Tot}$, let $\mathcal{F}_e$ have argument set $A=\{a_x\mid x\in\omega\}$, with the finite attacker set of $a_x$ encoded by $D_{\varphi_e(x)}$. Here a set of numerical codes is identified with the corresponding set of arguments, so that
\[
\Att_{\mathcal{F}_e}(a_x)=\{a_y\mid y\in D_{\varphi_e(x)}\}.
\]
Thus, a valid index provides an effective finite list of all attackers of each argument. Complexity classifications below are understood within $\mathrm{Tot}$ (or $\mathrm{Tot}\times\omega$ for acceptance): the upper-bound formulas need only agree with the decision problems on valid indices, and hardness reductions must always produce valid indices. They do not include the complexity of testing whether an arbitrary index belongs to $\mathrm{Tot}$.

We code the decision problems as follows:
\begin{enumerate}
    \item $\Cred_\sigma^{\mathrm{cfin}}:=\{\langle e,n\rangle\mid e\in\mathrm{Tot},\ (\exists S\in\sigma(\mathcal{F}_e))(a_n\in S)\}$;
    \item $\Skep_\sigma^{\mathrm{cfin}}:=\{\langle e,n\rangle\mid e\in\mathrm{Tot},\ (\forall S\in\sigma(\mathcal{F}_e))(a_n\in S)\}$;
    \item $\Ex_\sigma^{\mathrm{cfin}}:=\{e\in\mathrm{Tot}\mid \sigma(\mathcal{F}_e)\neq\emptyset\}$;
    \item $\NE_\sigma^{\mathrm{cfin}}:=\{e\in\mathrm{Tot}\mid (\exists S\in\sigma(\mathcal{F}_e))(S\neq\emptyset)\}$;
    \item $\Uni_\sigma^{\mathrm{cfin}}:=\{e\in\mathrm{Tot}\mid |\sigma(\mathcal{F}_e)|=1\}$.
\end{enumerate}
For grounded semantics, $\sigma(\mathcal{F}_e)$ in these definitions is the singleton $\{\grd(\mathcal{F}_e)\}$. We omit the superscript $\mathrm{cfin}$ when the representation is clear from context. We also write $\Cred_\sigma(\mathcal{F}_e)$ and $\Skep_\sigma(\mathcal{F}_e)$ for the corresponding sets of accepted arguments.

\section{Grounded Semantics}
We begin with grounded semantics. The grounded extension $G$ is the least fixed point of $\Gamma_{\AF}(S) = \{ a \in A \mid S \text{ defends } a \}$.

\begin{lemma}[$\omega$-continuity by Dung ~\cite{dung1995}]
If $\AF$ is finitary, $\Gamma_{\AF}$ is $\omega$-continuous, i.e., $\Gamma(\bigcup S_i) = \bigcup \Gamma(S_i)$. Consequently, $G = \bigcup_{n < \omega} \Gamma^n(\emptyset)$.
\end{lemma}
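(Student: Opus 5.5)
The plan is to establish $\omega$-continuity directly from finitarity, and then obtain the characterization of $G$ by a standard Kleene-style fixed-point argument. Two preliminary facts will be used. First, $\Gamma_{\AF}$ is monotone: if $S\subseteq T$ and $S$ defends $a$, then every attacker of $a$ is counterattacked by some member of $S\subseteq T$, so $T$ defends $a$. Second, the union in the statement is understood along an increasing chain $S_0\subseteq S_1\subseteq\cdots$, which is the form $\omega$-continuity takes.

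\textbf{Continuity.} Monotonicity gives $\bigcup_i\Gamma(S_i)\subseteq\Gamma(\bigcup_i S_i)$ immediately. For the reverse inclusion, suppose $a\in\Gamma(\bigcup_i S_i)$. Since $\AF$ is finitary, $\Att(a)=\{b_1,\dots,b_k\}$ is finite. For each $b_j$, choose a counterattacker $c_j\in\bigcup_i S_i$, and let $c_j\in S_{i_j}$. Put $m=\max_j i_j$, with $m=0$ if $k=0$. Because the chain is increasing, $S_m$ contains every $c_j$, so $S_m$ defends $a$ and $a\in\Gamma(S_m)$.

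This reverse inclusion is the only step where finitarity matters, and it is the main point to get right. With infinitely many attackers, the indices $i_j$ could be unbounded and no single stage would defend $a$. The classical example is an argument attacked by infinitely many $b_n$, each of which is attacked by an unattacked argument $c_n$.

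\textbf{Least fixed point.} Set $G_n=\Gamma^n(\emptyset)$. By induction and monotonicity, $G_0=\emptyset\subseteq G_1$, and $G_n\subseteq G_{n+1}$ implies $G_{n+1}\subseteq G_{n+2}$, so the $G_n$ form an increasing chain. Let $G^*=\bigcup_n G_n$. Continuity gives
\[
\Gamma(G^*)=\bigcup_n \Gamma(G_n)=\bigcup_n G_{n+1}=G^*,
\]
so $G^*$ is a fixed point.

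To show it is the least one, let $F$ be any fixed point. By induction, $G_n\subseteq F$ for all $n$: the base case is $\emptyset\subseteq F$, and if $G_n\subseteq F$ then $G_{n+1}=\Gamma(G_n)\subseteq\Gamma(F)=F$. Hence $G^*\subseteq F$.

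It remains to check that $G^*$ is conflict-free, since the definition of grounded requires this. Suppose $a,b\in G^*$ with $(b,a)\in R$. Then $G^*$ defends $a$, so some $c\in G^*$ attacks $b$. By an induction on the stage at which elements enter, one shows that each $G_n$ is conflict-free. The cleanest route is Dung's lemma that $\Gamma$ maps conflict-free admissible sets to admissible sets. An increasing union of conflict-free sets is then conflict-free, since any conflict would already occur at a finite stage. Therefore $G^*=G$.
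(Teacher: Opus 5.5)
Your proof is correct, and it is essentially the standard argument from Dung's paper, which is all the paper relies on here (it cites Dung and gives no proof of its own): continuity along increasing chains via the finitely many attackers, Kleene iteration from $\emptyset$, and stagewise admissibility of $\Gamma^n(\emptyset)$ to get conflict-freeness. As a minor cleanup, drop the unfinished opening sentence of your conflict-freeness paragraph, since arguing directly inside $G^*$ only produces another conflict and does not close without the stage induction you then give.
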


\begin{proposition}\label{prop:gpre}
Let $\AF = \langle A, R \rangle$ be a computably finitary argumentation framework. For any argument $x \in A$ and any finite step $n \in \omega$, the predicate $x \in \Gamma_{\AF}^n(\emptyset)$ is decidable ($\Delta_1^0$).
\end{proposition}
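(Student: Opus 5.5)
We argue by induction on $n$, uniformly in $x$: we build a single algorithm $P(n,x)$ that halts on every input and returns $1$ exactly when $x \in \Gamma_{\AF}^n(\emptyset)$. Since the algorithm is uniform in $n$, the relation $\{(n,x) \mid x \in \Gamma_{\AF}^n(\emptyset)\}$ is decidable as a binary relation, which is stronger than deciding each level separately.

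For the base case $n=0$, we have $\Gamma^0_{\AF}(\emptyset)=\emptyset$, so $P(0,x)$ simply returns $0$. For the inductive step, we unfold the definition of $\Gamma_{\AF}$:
\[
x \in \Gamma_{\AF}^{n+1}(\emptyset) \iff (\forall b \in \Att(x))(\exists c \in \Att(b))\, c \in \Gamma_{\AF}^{n}(\emptyset).
\]
On input $(n+1,x)$, the algorithm computes $f(x)$ and decodes the canonical index to obtain the finite list $\Att(x)=D_{f(x)}$. For each $b$ in this list it computes $D_{f(b)}=\Att(b)$, and for each $c$ in that list it calls $P(n,c)$.

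Both quantifiers in the displayed equivalence range over explicitly given finite sets, so they are bounded. Each recursive call halts by the induction hypothesis, and the recursion depth is $n$. Hence $P(n+1,x)$ halts and returns the correct answer. In effect, the recursion explores the finite attack tree of depth $2n$ rooted at $x$, which is finitely branching with computable branches. This is where finitarity is used: without a total computable $f$ returning canonical indices, the quantifier $\forall b \in \Att(x)$ could not be bounded, and we would only obtain a $\Pi$-type definition.

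The main point needing care is the inductive step. One must make sure the decision uses the \emph{computable} enumeration of attackers rather than merely the finiteness of $\Att(x)$. Finiteness alone would not let us know when the search over attackers has finished. The canonical index returned by $f$ gives both the exact attacker set and a certificate that it is complete. With that in place, the argument is routine, and $x \in \Gamma^n_{\AF}(\emptyset)$ is $\Delta^0_1$ uniformly in $n$ and $x$.
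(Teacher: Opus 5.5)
Your proof is correct and follows the same induction as the paper. You unfold $\Gamma_{\AF}^{n+1}(\emptyset)$ into the bounded check $\forall b\in\Att(x)\,\exists c\in\Att(b)\,(c\in\Gamma_{\AF}^{n}(\emptyset))$ and use the canonical indices from $f$ to make both quantifiers range over explicitly computed finite sets; you also build one algorithm uniform in $n$, a useful sharpening, since the paper's later $\Sigma_1^0$ upper bound for grounded acceptance relies on that uniformity.
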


\begin{proof}
We proceed by induction on $n$.
\begin{itemize}
    \item \text{Base Case ($n=0$):} By definition, $\Gamma_{\AF}^0(\emptyset) = \emptyset$. The predicate $x \in \emptyset$ always evaluates to false, which is trivially decidable.
    
    \item \text{Inductive Step:} Assume that for a given $n \in \omega$, the predicate $y \in \Gamma_{\AF}^n(\emptyset)$ is decidable for all $y \in A$. The next iteration of the characteristic operator is defined as:
    \[ \Gamma_{\AF}^{n+1}(\emptyset) = \{ x \in A \mid \forall y \in A ( (y, x) \in R \to \exists z \in \Gamma_{\AF}^n(\emptyset) ( (z, y) \in R) ) \} \]
    Since $\AF$ is computably finitary, the set $\Att(x)$ is finite and its canonical index can be effectively computed. Thus, the universal quantification over $y$ reduces to a finite check over the explicit members of $\Att(x)$. For each individual attacker $y \in \Att(x)$, we must check if $y$ is counter-attacked by some element currently in $\Gamma_{\AF}^n(\emptyset)$. This is equivalent to verifying whether $\Att(y) \cap \Gamma_{\AF}^n(\emptyset) \neq \emptyset$. Again, because $\AF$ is computably finitary, $\Att(y)$ is a finite, computable set. By the inductive hypothesis, membership in $\Gamma_{\AF}^n(\emptyset)$ is decidable. Therefore, testing whether the finite set $\Att(y)$ contains at least one element belonging to $\Gamma_{\AF}^n(\emptyset)$ takes finite time and is decidable. Thus, the entire predicate $x \in \Gamma_{\AF}^{n+1}(\emptyset)$ is decidable ($\Delta_1^0$).
\end{itemize}
By mathematical induction, the predicate $x \in \Gamma_{\AF}^n(\emptyset)$ is decidable for any $n \in \omega$.
\end{proof}

\subsection{Credulous and Skeptical Acceptance}

Andrews and San Mauro have already established upper and lower bounds for credulous and skeptical acceptance in computably finitary AFs. Their construction for complete semantics directly uses the grounded extension to encode computably enumerable sets, yielding $\Sigma_1^0$-completeness of grounded acceptance within $\mathrm{Tot}\times\omega$~\cite{andrews2025complexity}.

Since the grounded extension is unique, credulous and skeptical acceptance coincide. The upper bound can also be seen from $\omega$-continuity and Proposition~\ref{prop:gpre}:
\[
a_x\in\grd(\mathcal{F}_e)
\quad\Longleftrightarrow\quad
\exists n\in\omega\;
\bigl(a_x\in\Gamma_{\mathcal{F}_e}^{n}(\emptyset)\bigr).
\]
On valid indices, the finite-stage predicate is decidable uniformly in $e,x,n$. Equivalently, a finite computation witnessing membership at some stage can be effectively verified. Hence $\Cred_{\grd}^{\mathrm{cfin}}$ and $\Skep_{\grd}^{\mathrm{cfin}}$ are $\Sigma_1^0$-complete within $\mathrm{Tot}\times\omega$.

\subsection{Non-empty Existence}

\begin{proposition}[Non-empty existence under grounded semantics]
\label{prop:grounded-nonempty}
Under the computably finitary representation, $\NE_{\grd}^{\mathrm{cfin}}$ is $\Sigma_1^0$-complete within $\mathrm{Tot}$.
\end{proposition}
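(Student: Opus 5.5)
The plan has two parts: an upper bound obtained from $\omega$-continuity, and a lower bound obtained by reducing the halting problem (or any $\Sigma_1^0$-complete set $K$) to $\NE_{\grd}^{\mathrm{cfin}}$, with the reduction always producing total indices.

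\emph{Upper bound.} Since the grounded extension is the unique element of $\sigma(\mathcal{F}_e)$, we have $e\in\NE_{\grd}^{\mathrm{cfin}}$ iff $\grd(\mathcal{F}_e)\neq\emptyset$. By $\omega$-continuity, $\grd(\mathcal{F}_e)=\bigcup_n\Gamma^n_{\mathcal{F}_e}(\emptyset)$. On valid indices this gives
\[
e\in\NE_{\grd}^{\mathrm{cfin}}\iff \exists x\,\exists n\,\bigl(a_x\in\Gamma^n_{\mathcal{F}_e}(\emptyset)\bigr).
\]
By Proposition~\ref{prop:gpre}, the matrix is decidable uniformly in $e,x,n$ when $e\in\mathrm{Tot}$: the inductive procedure only calls $\varphi_e$ on finitely many inputs, and each of those calls halts. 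So the condition is $\Sigma_1^0$.

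A simpler equivalent criterion is also available: the grounded extension is nonempty iff some argument is unattacked. If no argument is unattacked, then $\Gamma(\emptyset)=\emptyset$, so $\emptyset$ is a fixed point and hence the least one. This reads as $\exists x\,(\varphi_e(x)=0)$, which is visibly $\Sigma_1^0$ on $\mathrm{Tot}$.

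\emph{Lower bound.} Let $K=\{m\mid \varphi_m(m)\downarrow\}$. Uniformly in $m$, I will build a total computable $g_m$ with index $h(m)$, where $h$ is given by the s-m-n theorem. The intent is that $a_x$ is unattacked iff $\varphi_m(m)$ halts in exactly $x$ steps, and otherwise $a_x$ is attacked by itself. Concretely:
\[
g_m(x)=\begin{cases}0 & \text{if }\varphi_m(m)\text{ halts in exactly }x\text{ steps},\\ 2^x & \text{otherwise.}\end{cases}
\]
This is decidable for each $x$, so $h(m)\in\mathrm{Tot}$ always.

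If $m\notin K$, every argument is self-attacking, so $\Gamma(\emptyset)=\emptyset$ and $\grd=\emptyset$. If $m\in K$, the argument $a_{s}$, where $s$ is the halting time, is unattacked, so $a_s\in\Gamma(\emptyset)\subseteq\grd$. Hence $m\in K\iff h(m)\in\NE_{\grd}^{\mathrm{cfin}}$, which gives $\Sigma_1^0$-hardness within $\mathrm{Tot}$.

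The only delicate step is ensuring that the reduction never leaves $\mathrm{Tot}$, since the classifications are stated relative to valid indices. The construction above handles this by deciding each attacker set through a bounded simulation of $\varphi_m(m)$, rather than waiting for halting. I would also check the direction "$\grd\neq\emptyset$ implies some unattacked argument" carefully. It follows because $\Gamma^{n+1}(\emptyset)\neq\emptyset$ for some minimal $n$, and if $\Gamma(\emptyset)=\emptyset$ then every iterate is empty.
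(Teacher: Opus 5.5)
Your proposal is correct and takes essentially the same approach as the paper. For the upper bound you use the observation that $\grd(\mathcal{F}_e)\neq\emptyset$ iff some argument is unattacked, i.e.\ $\exists x\,(\varphi_e(x)=0)$; for hardness you give an s-m-n reduction from $K$ whose attacker sets are decided by bounded simulation, so the index is always total. Your gadget, a self-attacking $a_x$ that is unattacked exactly at the halting time, is only a leaner variant of the paper's pairs $a_{2s},a_{2s+1}$ with ``halts within $s$ steps.''
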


\begin{proof}
First observe that, for every argumentation framework $\mathcal{F}$,
\[
\grd(\mathcal{F})\neq\emptyset
\quad\Longleftrightarrow\quad
\Gamma_{\mathcal{F}}(\emptyset)\neq\emptyset
\quad\Longleftrightarrow\quad
\exists a\in A\;\bigl(\Att_{\mathcal{F}}(a)=\emptyset\bigr).
\]
Indeed, $\Gamma_{\mathcal{F}}(\emptyset)$ consists precisely of the unattacked arguments and is contained in the grounded extension. If there are no unattacked arguments, then $\Gamma_{\mathcal{F}}(\emptyset)=\emptyset$, so $\emptyset$ is already a fixed point and $\grd(\mathcal{F})=\emptyset$.

For $e\in\mathrm{Tot}$, the attacker set of $a_x$ is encoded by $D_{\varphi_e(x)}$, and therefore
\[
e\in\NE_{\grd}^{\mathrm{cfin}}
\quad\Longleftrightarrow\quad
\exists x\in\omega\;
\bigl(D_{\varphi_e(x)}=\emptyset\bigr).
\]
Given a valid index $e$ and an argument code $x$, we can compute this finite set and decide whether it is empty. More explicitly, the condition can be expressed as the existence of $x$ and a finite computation in which $\varphi_e(x)$ halts with output $0$, the canonical index of the empty set. This gives a $\Sigma_1^0$ definition on the valid representation domain.

For $\Sigma_1^0$-hardness, we give a many-one reduction from the halting set
\[
K=\{e\mid \varphi_e(e)\text{ eventually halts}\}.
\]
Given $e$, construct a framework $\mathcal{F}_{f(e)}$ with argument set $\{a_n\mid n\in\omega\}$. For each $s\in\omega$, set
\[
\Att(a_{2s})=
\begin{cases}
\emptyset, & \text{if }\varphi_e(e)\text{ halts within the first }s\text{ steps},\\
\{a_{2s+1}\}, & \text{otherwise},
\end{cases}
\]
and set $\Att(a_{2s+1})=\{a_{2s+1}\}$.
Given $e$ and $s$, simulating $\varphi_e(e)$ for the finite number of steps $s$ suffices to compute these attacker sets. Their canonical indices are respectively $0$ or $2^{2s+1}$ for $a_{2s}$, and $2^{2s+1}$ for $a_{2s+1}$. The construction is uniform, so the $s$-$m$-$n$ theorem yields a total computable index map $f$, with $f(e)\in\mathrm{Tot}$ for every $e$.

If $e\in K$ and $\varphi_e(e)$ halts within $t$ steps, then $a_{2s}$ is unattacked for every $s\geq t$, and hence $\grd(\mathcal{F}_{f(e)})\neq\emptyset$. If $e\notin K$, every $a_{2s}$ is attacked by $a_{2s+1}$, and every $a_{2s+1}$ attacks itself. Thus there are no unattacked arguments and $\grd(\mathcal{F}_{f(e)})=\emptyset$. Consequently,
\[
e\in K
\quad\Longleftrightarrow\quad
f(e)\in\NE_{\grd}^{\mathrm{cfin}}.
\]
Therefore $\NE_{\grd}^{\mathrm{cfin}}$ is $\Sigma_1^0$-complete within $\mathrm{Tot}$.
\end{proof}

\subsection{Existence and Uniqueness}

Since every argumentation framework has exactly one grounded extension, both $\Ex_{\grd}^{\mathrm{cfin}}$ and $\Uni_{\grd}^{\mathrm{cfin}}$ are trivial on the valid representation domain: every $e\in\mathrm{Tot}$ is a positive instance.

\section{Preferred Semantics}
We now turn to preferred semantics. A preferred extension is a maximal (w.r.t.\ $\subseteq$) admissible set. 
Because every chain of admissible sets has an upper bound, by Zorn's Lemma, a preferred extension always exists, and any argument contained in an admissible set is also contained in at least one preferred extension.

\begin{lemma} \label{23}
Let $\AF = \langle A, R \rangle$ be an argumentation framework. For any admissible set $S_0 \in \adm(\AF)$, there exists a preferred extension $E \in \pref(\AF)$ such that $S_0 \subseteq E$.
\end{lemma}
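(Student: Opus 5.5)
The plan is to apply Zorn's Lemma to the poset
\[
\mathcal{P}=\{S\in\adm(\AF)\mid S_0\subseteq S\},
\]
ordered by inclusion. This poset is non-empty because $S_0\in\mathcal{P}$. I will show that every chain in $\mathcal{P}$ has an upper bound in $\mathcal{P}$. Zorn's Lemma then gives a maximal element $E$ of $\mathcal{P}$, and it remains to check that $E$ is maximal among \emph{all} admissible sets, i.e.\ $E\in\pref(\AF)$.

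\textbf{Chains.} Let $\mathcal{C}\subseteq\mathcal{P}$ be a chain. If $\mathcal{C}$ is empty, $S_0$ is an upper bound. Otherwise take $U=\bigcup\mathcal{C}$; clearly $S_0\subseteq U$ and $U$ bounds every member of $\mathcal{C}$, so it suffices to show $U$ is admissible.

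\emph{Conflict-freeness.} Suppose $a,b\in U$ with $(a,b)\in R$. Then $a\in S_1$ and $b\in S_2$ for some $S_1,S_2\in\mathcal{C}$. Since $\mathcal{C}$ is a chain, both lie in the larger of $S_1,S_2$, which is conflict-free, a contradiction.

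\emph{Self-defense.} Let $a\in U$, say $a\in S\in\mathcal{C}$, and let $b$ attack $a$. Since $S$ is admissible, some $c\in S\subseteq U$ attacks $b$, so $U$ defends $a$.

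This step uses neither finitarity nor $\omega$-continuity. Only the chain property is needed, and only for conflict-freeness.

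\textbf{Maximality.} Suppose $E\subseteq S'$ with $S'\in\adm(\AF)$. Then $S_0\subseteq E\subseteq S'$, so $S'\in\mathcal{P}$. By maximality of $E$ in $\mathcal{P}$, we get $S'=E$. Hence $E$ is a maximal admissible set, that is, $E\in\pref(\AF)$ with $S_0\subseteq E$.

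I expect no step to be a real obstacle. The only points needing care are the empty chain, which is handled by using $S_0$ as the bound, and the chain argument for conflict-freeness of the union. It may be worth remarking that for arbitrary, possibly infinite, $\AF$ the argument genuinely uses Zorn's Lemma (choice). In the finitary countable setting one could instead build $E$ by a transfinite or $\omega$-length greedy enumeration, but that is not needed for this lemma.
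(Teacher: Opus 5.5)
Your proposal is correct and is essentially the paper's own proof: Zorn's Lemma on the admissible supersets of $S_0$, with the union of a chain shown to be conflict-free (via the chain property) and self-defending. Your explicit handling of the empty chain, and your argument that maximality in $\mathcal{P}$ implies maximality among all admissible sets, fill two small steps the paper leaves implicit.
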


\begin{proof}
Let $\mathcal{M} = \{ S \in \adm(\AF) \mid S_0 \subseteq S \}$ be the collection of all admissible sets containing $S_0$. The poset $\langle \mathcal{M}, \subseteq \rangle$ is partially ordered by set inclusion. We show that every totally ordered chain in $\mathcal{M}$ has an upper bound in $\mathcal{M}$.

Let $\mathcal{C} \subseteq \mathcal{M}$ be an arbitrary totally ordered chain, and consider its union $U = \bigcup_{X \in \mathcal{C}} X$. Clearly, $U$ is an upper bound of every $X \in \mathcal{C}$ under set inclusion. To establish that $U \in \mathcal{M}$, we must prove that $U$ is an admissible set (i.e., conflict-free and self-defending):

\begin{itemize}
    \item \textbf{Conflict-freeness of $U$}: Suppose for contradiction that $U$ is not conflict-free. Then, there exist $a, b \in U$ such that $(a, b) \in R$. By definition of the union, there exist $X_a, X_b \in \mathcal{C}$ such that $a \in X_a$ and $b \in X_b$. Since $\mathcal{C}$ is a totally ordered chain, we must have either $X_a \subseteq X_b$ or $X_b \subseteq X_a$. Assume $X_a \subseteq X_b$, which implies $a, b \in X_b$. However, since $X_b \in \mathcal{C} \subseteq \mathcal{M}$, $X_b$ is admissible and must be conflict-free, contradicting $(a, b) \in R$. Thus, $U$ is conflict-free.
    
    \item \textbf{Self-defense of $U$}: Let $a \in U$ be an arbitrary argument, and suppose there exists $y \in A$ such that $(y, a) \in R$. Since $a \in U$, there exists some $X_a \in \mathcal{C}$ such that $a \in X_a$. Since $X_a$ is admissible, it defends $a$, meaning there exists $z \in X_a$ such that $(z, y) \in R$. Since $X_a \subseteq U$, we have $z \in U$. Thus, $U$ defends all its elements.
\end{itemize}

Since $U$ is both conflict-free and self-defending, $U \in \adm(\AF)$. Furthermore, since $S_0 \subseteq X$ for all $X \in \mathcal{C}$, we have $S_0 \subseteq U$, which guarantees $U \in \mathcal{M}$. By Zorn's Lemma, $\langle \mathcal{M}, \subseteq \rangle$ contains at least one maximal element $E$. By definition, $E$ is an admissible set containing $S_0$. The maximality of $E$ in $\mathcal{M}$ ensures that $E \in \pref(\AF)$.
\end{proof}

For proving the upper bound complexity of decision problems $\Skep_{\pref}$ and $\UniPref$, we will utilize the technique from \cite{andrews2025complexity,andrews2025complexity} to build a bridge between paths in a computably finitary tree and admissible sets, as well as König's Lemma.
\begin{definition} 
Any given string $\sigma \in \omega^{<\omega}$ defines two subsets of arguments in $\mathcal{F}$:
\begin{itemize}
    \item $\mathrm{In}_\sigma = D \cup \{a_j : \sigma(2j) = 1\} \cup \{a_k : (\exists j)(\sigma(2j + 1) = k + 1)\} \cup \{a_i : (\exists j)(\sigma(2j + 1) > 0 \land g_j^+ = a_i)\}$;
    \item $\mathrm{Out}_\sigma = E \cup \{a_j : \sigma(2j) = 0\} \cup \{a_i : (\exists j)(\sigma(2j + 1) > i + 1 \land a_i \rightarrowtail g_j^-)\} \cup \{a_i : (\exists j)(\sigma(2j + 1) = 0 \land g_j^+ = a_i)\}$.
\end{itemize}
We define $T_{\mathcal{F}}$ as the set of $\sigma$ such that
\begin{itemize}
    \item $\mathrm{In}_\sigma$ does not contain elements $a_i \rightarrowtail a_j$ with $i, j < |\sigma|$;
    \item $\mathrm{In}_\sigma \cap \mathrm{Out}_\sigma \cap \{a_i : i < |\sigma|\} = \emptyset$;
    \item If $2j < |\sigma|$, then $\sigma(2j) \in \{0, 1\}$;
    \item If $2j + 1 < |\sigma|$ and $\sigma(2j + 1) = k + 1$, then $a_k \rightarrowtail g_j^-$.
\end{itemize}
Note that if $\mathcal{F}$ is computable, then $T_{\mathcal{F}}$ is computable, and if $\mathcal{F}$ is computably finitary, then $T_{\mathcal{F}}$ is computably finitely branching.
\end{definition}

\begin{theorem}[] \label{thm:andrews_tree}
Let $\AF = \langle A, R \rangle$ be a computably finitary argumentation framework and $x \in A$. There exists a non-empty admissible set $S \in \adm(\AF)$ with $x \in S$ if and only if the computation tree $T_{\mathcal{F}}$ contains an infinite path $\pi \in [T_{\mathcal{F}}]$. 
\end{theorem}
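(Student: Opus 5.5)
The idea is to use $T_{\mathcal{F}}$ with $D=\{x\}$ and $E=\emptyset$, so that $x$ is forced into $\mathrm{In}_\sigma$. We also fix a computable enumeration $(g_j^+,g_j^-)_{j\in\omega}$ of all pairs $(a_i,b)$ with $b\rightarrowtail a_i$, which exists because each attacker set is computable and finite. With these choices the odd coordinates $\sigma(2j+1)$ record the defence obligations: either $g_j^+$ is declared out ($\sigma(2j+1)=0$), or it is in and $a_k$ is named as its defender against $g_j^-$ ($\sigma(2j+1)=k+1$). The even coordinates $\sigma(2j)$ decide membership of $a_j$. The proof then has two directions, each tied to one side of the biconditional. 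The computability and finite-branching remarks are only needed later for the complexity bounds.

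\emph{($\Rightarrow$)} Let $S$ be admissible with $x\in S$. We define $\pi$ by setting $\pi(2j)=1$ iff $a_j\in S$. For odd places, put $\pi(2j+1)=0$ if $g_j^+\notin S$. If $g_j^+\in S$, admissibility gives some $a_k\in S$ with $a_k\rightarrowtail g_j^-$, and we put $\pi(2j+1)=k+1$ for the least such $k$. It then suffices to check that every prefix satisfies the four clauses. We get $\mathrm{In}_{\pi\upharpoonright n}\subseteq S$, since the contributions from $D$, the even bits, the named defenders and the $g_j^+$ are all in $S$. We also get $\mathrm{Out}_{\pi\upharpoonright n}\cap S=\emptyset$ for the $\sigma(2j)=0$ and $\sigma(2j+1)=0$ clauses. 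The third clause of $\mathrm{Out}$, namely ``$\sigma(2j+1)>i+1$ and $a_i\rightarrowtail g_j^-$'', contains only attackers of $g_j^-$ with smaller index than the chosen defender, and these are outside $S$ by minimality. Conflict-freeness of $S$ gives the first clause, and the remaining clauses hold by construction.

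\emph{($\Leftarrow$)} Given $\pi\in[T_{\mathcal{F}}]$, set $S=\bigcup_n \mathrm{In}_{\pi\upharpoonright n}$. Then $x\in S$ since $x\in D$, so $S$ is nonempty. Conflict-freeness follows from the first clause, because any attack $a_i\rightarrowtail a_j$ inside $S$ already appears in a long enough prefix. For defence, take $a\in S$ and an attacker $b$ of $a$, and choose $j$ with $(g_j^+,g_j^-)=(a,b)$. If $\pi(2j+1)=0$, then $a\in\mathrm{Out}$, which contradicts the second clause once the prefix covers the index of $a$. Hence $\pi(2j+1)=k+1$, and by the fourth clause $a_k\rightarrowtail b$ with $a_k\in\mathrm{In}\subseteq S$. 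So $S$ is admissible, and consistency between in/out statuses forces the even bits to agree with $S$.

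The main obstacle is the bookkeeping in ($\Rightarrow$). The tree definition puts constraints on $\mathrm{Out}$ that involve chosen defenders, so the defender must be chosen canonically, as the least index, for the prefix conditions to hold. I would check carefully that the minimality choice makes the third $\mathrm{Out}$ clause disjoint from $S$, and that clauses are only tested on indices $<|\sigma|$, so finite prefixes never see inconsistencies that $S$ lacks. The finitary hypothesis enters only through the computable enumeration of the pairs $(g_j^+,g_j^-)$ and the bounded range of $\sigma(2j+1)$.
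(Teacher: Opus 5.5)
Your proof is correct and is essentially the argument the paper relies on. The paper gives no proof of its own here and defers to Andrews and San Mauro, whose path/admissible-set correspondence is exactly yours: read the unspecified $D,E$ as $\{x\},\emptyset$; send an admissible $S\ni x$ to the path that records membership and names the least-index defender, which keeps the third $\mathrm{Out}$ clause disjoint from $S$; and recover an admissible set containing $x$ from a path as $S=\bigcup_n\mathrm{In}_{\pi\upharpoonright n}$.
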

\begin{proof}
    See ~\cite{andrews2025complexity} for a complete proof.
\end{proof}

\begin{lemma}[König's Lemma]
Let $T$ be an infinite tree. If every node in $T$ has a finite number of children (i.e., $T$ is finitely branching), then $T$ contains at least one infinite path.
\end{lemma}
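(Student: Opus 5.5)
The statement to prove is König's Lemma: an infinite, finitely branching tree $T\subseteq\omega^{<\omega}$ has an infinite path. The plan is the classical construction. We build a sequence of nodes $\sigma_0\prec\sigma_1\prec\sigma_2\prec\cdots$ with $|\sigma_n|=n$. Each node is chosen so that the subtree above it, $T_{\sigma}=\{\tau\in T\mid \sigma\preceq\tau \text{ or } \tau\preceq\sigma\}$, is infinite. The union $\pi=\bigcup_n\sigma_n\in\omega^\omega$ will then satisfy $\pi\upharpoonright_n=\sigma_n\in T$ for all $n$, so $\pi\in[T]$.

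\textbf{Base case.} Since $T$ is nonempty and closed under prefixes, the empty string belongs to $T$. We set $\sigma_0$ to be the empty string. Then $T_{\sigma_0}=T$, which is infinite by hypothesis.

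\textbf{Inductive step.} Suppose $\sigma_n\in T$ has been chosen with infinitely many extensions in $T$. Every proper extension of $\sigma_n$ in $T$ extends exactly one child $\sigma_n{}^\frown k\in T$, and each such child lies in $T$ by prefix-closure. Hence the set of proper extensions of $\sigma_n$ is the union, over the children of $\sigma_n$, of the sets of extensions of each child. Because $T$ is finitely branching, this is a finite union. An infinite set cannot be a finite union of finite sets, so at least one child $\sigma_n{}^\frown k$ has infinitely many extensions in $T$. We let $\sigma_{n+1}$ be such a child, say the one with least $k$, so that the choice is definite and no appeal to choice principles is needed.

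\textbf{Main obstacle.} The only substantive point is the pigeonhole step: the infinitude of the subtree must be preserved at every level. This is exactly where finite branching is used. In an infinitely branching tree the infinitely many extensions could be spread over infinitely many children, each with only finitely many extensions, and no path need exist; for example, the tree consisting of the strings $\langle\rangle$, $\langle k\rangle$ and the chains of length $k$ above $\langle k\rangle$. The extension count is not computable in general, so the path obtained need not be computable. That is irrelevant here, since only existence is claimed.
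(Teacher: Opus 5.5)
Your proof is correct and complete. It is the standard pigeonhole argument: finite branching keeps the set of extensions infinite at each level, and choosing the least such child makes the recursion definite. The paper states K\"onig's Lemma without proof, as a classical fact, so your argument is exactly the one its citation implicitly relies on.
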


\subsection{Credulous Acceptance}

\begin{lemma} \label{3}
Let $\AF = \langle A, R \rangle$ be an argumentation framework. An argument $x \in A$ is credulously accepted under the preferred semantics iff it is credulously accepted under the admissible semantics. That is,
\[ \text{Cred}_{\pref}(\AF) = \text{Cred}_{\adm}(\AF) \]
\end{lemma}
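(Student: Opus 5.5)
The plan is to prove the two inclusions separately. Throughout, $\Cred_{\adm}(\AF)$ denotes the set of arguments lying in at least one admissible set, and $\Cred_{\pref}(\AF)$ the set of arguments lying in at least one preferred extension.

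\emph{From preferred to admissible.} Suppose $x \in \Cred_{\pref}(\AF)$, and let $E \in \pref(\AF)$ contain $x$. By the definition of preferred semantics, $E$ is a maximal admissible set, so in particular $E \in \adm(\AF)$. Hence $x \in \Cred_{\adm}(\AF)$. This direction is immediate from the definitions.

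\emph{From admissible to preferred.} Suppose $x \in \Cred_{\adm}(\AF)$, and let $S_0 \in \adm(\AF)$ contain $x$. Lemma~\ref{23} gives a preferred extension $E \in \pref(\AF)$ with $S_0 \subseteq E$. Then $x \in E$, so $x \in \Cred_{\pref}(\AF)$.

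There is no real obstacle here: the only non-trivial ingredient is the Zorn's Lemma extension argument, and that is already packaged in Lemma~\ref{23}. The argument uses neither finitarity nor computability, so the equality holds for arbitrary, possibly infinite, AFs. This matters for the later complexity bound, because it lets us replace a question quantifying over maximal sets by one about the existence of an admissible set containing $x$. That existence question is the one handled by the tree $T_{\mathcal{F}}$ of Theorem~\ref{thm:andrews_tree} together with König's Lemma.
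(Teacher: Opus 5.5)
Your proof is correct and is essentially the paper's own argument. The inclusion $\Cred_{\pref}(\AF) \subseteq \Cred_{\adm}(\AF)$ holds because every preferred extension is admissible. The reverse inclusion follows by extending an admissible witness containing $x$ to a preferred extension via Lemma~\ref{23}.
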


\begin{proof}
We establish the identity by bidirectional inclusion:
\begin{itemize}
    \item ($\subseteq$) Let $x \in \text{Cred}_{\pref}(\AF)$. There exists a preferred extension $E \in \pref(\AF)$ containing $x$. Since every preferred extension is admissible by definition ($E \in \adm(\AF)$), $E$ serves as a witnessing admissible set. Thus, $x \in \text{Cred}_{\adm}(\AF)$.
    
    \item ($\supseteq$) Let $x \in \text{Cred}_{\adm}(\AF)$. There exists an admissible set $S_0 \in \adm(\AF)$ containing $x$. By Lemma \ref{23}, there exists a preferred extension $E \in \pref(\AF)$ such that $S_0 \subseteq E$. Since $x \in S_0$, we have $x \in E$, which implies $x \in \text{Cred}_{\pref}(\AF)$.
\end{itemize}
\end{proof}

\begin{theorem}
In a computably finitary AF $\AF$, $\Cred_{\pref}(\AF)$ is $\Pi_1^0$-complete.
\end{theorem}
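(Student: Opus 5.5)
By Lemma~\ref{3}, it suffices to show that credulous acceptance under admissible semantics is $\Pi_1^0$-complete within $\mathrm{Tot}\times\omega$, i.e.\ that
\[
\{\langle e,n\rangle \mid e\in\mathrm{Tot},\ \exists S\in\adm(\mathcal{F}_e)\,(a_n\in S)\}
\]
is $\Pi_1^0$-complete on valid indices.

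\textbf{Upper bound.} We apply Theorem~\ref{thm:andrews_tree} to the tree $T_{\mathcal{F}_e}$ built relative to the target argument $a_n$, so that $D=\{a_n\}$ is forced into $\mathrm{In}_\sigma$. The theorem gives: $a_n$ lies in some admissible set iff $[T_{\mathcal{F}_e}]\neq\emptyset$.

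Since $\mathcal{F}_e$ is computably finitary, $T_{\mathcal{F}_e}$ is computable uniformly in $\langle e,n\rangle$. It is also computably finitely branching: the even positions range over $\{0,1\}$, and the odd positions range over $0$ together with the codes $k+1$ of attackers of $g_j^-$, a finite list computable from $\varphi_e$. Hence for each length $\ell$ the finite level $T\cap\omega^{\ell}$ can be enumerated effectively and tested for emptiness.

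By König's Lemma, and because $T$ is prefix-closed, $[T]\neq\emptyset$ iff $T$ has infinite height iff
\[
\forall \ell\;\exists \sigma\in T\,(|\sigma|=\ell).
\]
The inner existential ranges over the computable finite set of candidates at level $\ell$, so it is a decidable predicate of $\langle e,n,\ell\rangle$. This gives a $\Pi_1^0$ definition on $\mathrm{Tot}\times\omega$.

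The step I expect to need the most care is this one: checking that the branching bound really is computable, so that the existential is bounded. That bound is what keeps the definition from becoming $\Sigma_2^0$.

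\textbf{Hardness.} We reduce the complement of the halting set, $\overline{K}$ (which is $\Pi_1^0$-complete), to credulous acceptance. Given $e$, define $\mathcal{F}_{f(e)}$ with arguments $a_0$ (the target), $b_s$ and $c_s$ for $s\in\omega$, and self-attacking dummies used for padding. Intuitively, $a_0$ is defended along a chain that breaks exactly at the halting stage. The attacks are:
\begin{itemize}
\item $a_0$ is attacked by $b_0$;
\item each $b_s$ is attacked only by $c_s$;
\item each $c_s$ is attacked by $b_{s+1}$ if $\varphi_e(e)$ has not halted within $s$ steps, and is unattacked otherwise.
\end{itemize}
Every attacker set is computable by simulating $\varphi_e(e)$ for $s$ steps, so $f(e)\in\mathrm{Tot}$ via the $s$-$m$-$n$ theorem.

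If $\varphi_e(e)$ never halts, then $\{a_0\}\cup\{c_s\mid s\in\omega\}$ is conflict-free and admissible: $c_s$ counters $b_s$, and $c_{s}$ is in turn defended against $b_{s+1}$ by $c_{s+1}$. Hence $a_0$ is credulously accepted.

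If $\varphi_e(e)$ halts at stage $t$, then $c_t$ is unattacked and so belongs to the grounded extension. Then $b_t$ is out, which leaves $c_{t-1}$ undefended. Propagating backwards, every $c_s$ with $s<t$ is attacked by a $b_{s+1}$ that no admissible set can counter, since the only attacker of $b_{s+1}$ is $c_{s+1}$, whose only attacker... I need to recheck this direction. In the alternating chain, $c_t$ unattacked makes $b_t$ defeated, which makes $c_{t-1}$ defensible. The defended chain therefore still succeeds, so this gadget has the parity wrong.

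The fix is to make halting insert a new unattacked attacker $d_t$ of $c_t$ when $\varphi_e(e)$ halts exactly at step $t$. Then $c_t$ cannot be in any admissible set, since $d_t$ is unattacked. Consequently $b_t$ cannot be countered, so $c_{t-1}$ is not admissible, and by induction down the chain $a_0$ is in no admissible set.

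The attacker set of $c_t$ is then $\{b_{t+1},d_t\}$ or $\{b_{t+1}\}$, which is decidable from $t$. Thus
\[
e\in\overline{K}\iff f(e)\in\Cred_{\pref},
\]
and combined with the upper bound this yields $\Pi_1^0$-completeness.
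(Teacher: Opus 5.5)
Your proof is correct, and its first step matches the paper's: Lemma~\ref{3} reduces the problem to $\Cred_{\adm}$. The difference is what comes next. The paper simply cites Andrews and San Mauro for the $\Pi_1^0$-completeness of $\Cred_{\adm}$ on computably finitary frameworks, whereas you prove both halves yourself.

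Your upper bound is the standard argument built from the paper's own Theorem~\ref{thm:andrews_tree} and K\"onig's Lemma. You rightly identify the computable branching bound as what makes the inner existential bounded.

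Your hardness reduction from $\overline{K}$ uses an elementary defence chain $a_0\leftarrow b_0\leftarrow c_0\leftarrow b_1\leftarrow c_1\leftarrow\cdots$, in which halting at step $t$ adds an unattacked attacker $d_t$ of $c_t$. It checks out in both directions. Any admissible $S\ni a_0$ must contain $c_0$, hence $c_1$, and so on up to $c_t$, which cannot be defended against $d_t$. If $\varphi_e(e)$ never halts, $\{a_0\}\cup\{c_s\mid s\in\omega\}$ is admissible.

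The paper's route buys brevity. Yours buys three things:
\begin{itemize}
\item a self-contained proof;
\item hardness already for frameworks in which every argument has at most two attackers;
\item since the construction is uniform in $e$, a single computably finitary framework (the disjoint union of the $\mathcal{F}_{f(e)}$) whose credulous set is $\Pi_1^0$-complete, which also covers the literal per-framework wording of the statement.
\end{itemize}

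In a write-up, delete the abandoned first gadget and state the final one directly:
\begin{itemize}
\item $\Att(a_0)=\{b_0\}$ and $\Att(b_s)=\{c_s\}$;
\item $\Att(c_s)=\{b_{s+1},d_s\}$ if $\varphi_e(e)$ halts exactly at step $s$, and $\{b_{s+1}\}$ otherwise;
\item $\Att(d_s)=\emptyset$ for all $s$.
\end{itemize}
The inactive $d_s$ are harmless, and no self-attacking padding is needed.

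There are two small slips. First, without a computable branching bound, $\forall\ell\,\exists\sigma\in T\,(|\sigma|=\ell)$ would be $\Pi_2^0$, not $\Sigma_2^0$. Second, your matrix is decidable only on valid indices. To get a genuine $\Pi_1^0$ formula that agrees with the problem on $\mathrm{Tot}\times\omega$, state it as follows: for all $\ell$ and $s$, it is not the case that the computation of level $\ell$ of the tree halts within $s$ steps and finds that level empty. This is the same device the paper uses for $\NE_{\grd}$.
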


\begin{proof}
In Dung's abstract argumentation frameworks, By Lemma \ref{3}, for any argument $x \in A$, $x$ belongs to some preferred extension $E \in \pref(\AF)$ if $x$ belongs to some admissible set $S \in \adm(\AF)$. Thus, we have $\text{Cred}_{\pref}(\AF) = \text{Cred}_{\adm}(\AF)$.

By the established results ~\cite{andrews2024infinite}, while $\text{Cred}_{\adm}$ is $\Sigma_1^1$-complete for general infinite frameworks, introducing the computably finitary restriction reduces its complexity to $\Pi_1^0$. Due to the extensional identity $\text{Cred}_{\pref}(\AF) = \text{Cred}_{\adm}(\AF)$, the $\Pi_1^0$-completeness of $\text{Cred}_{\pref}$ follows immediately.
\end{proof}

\subsection{Non-empty Existence}

\begin{lemma}\label{2}
Let $\AF = \langle A, R \rangle$ be an argumentation framework. There exists a non-empty preferred extension $E \in \pref(\AF)$ iff there exists a non-empty admissible set $S \in \adm(\AF)$.
\end{lemma}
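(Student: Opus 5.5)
The plan is to prove the two directions of the biconditional separately. The forward direction is immediate from the definitions, and the backward direction is a direct application of Lemma~\ref{23}. No computability-theoretic machinery is needed; this is a purely order-theoretic fact about Dung frameworks, valid for arbitrary (not necessarily finitary) $\AF$.

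($\Rightarrow$) Suppose $E \in \pref(\AF)$ with $E \neq \emptyset$. A preferred extension is by definition a maximal admissible set, so in particular $E \in \adm(\AF)$. Hence $E$ itself witnesses the existence of a non-empty admissible set.

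($\Leftarrow$) Suppose $S_0 \in \adm(\AF)$ with $S_0 \neq \emptyset$, and pick some $x \in S_0$. By Lemma~\ref{23}, there is a preferred extension $E \in \pref(\AF)$ with $S_0 \subseteq E$. Then $x \in E$, so $E \neq \emptyset$, which is the required non-empty preferred extension. One could equivalently route this through Lemma~\ref{3}: $x \in \Cred_{\adm}(\AF) = \Cred_{\pref}(\AF)$, so some preferred extension contains $x$.

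There is essentially no obstacle here. The only point needing care is that the Zorn's Lemma argument underlying Lemma~\ref{23} must apply to an arbitrary admissible $S_0$ rather than only to $\emptyset$, and Lemma~\ref{23} is already stated in exactly that generality. I will close by noting the intended use: the lemma gives $\NE_{\pref}^{\mathrm{cfin}} = \{e \in \mathrm{Tot} \mid \exists S \in \adm(\mathcal{F}_e),\ S \neq \emptyset\}$. This reduces the $\Sigma_2^0$ classification of $\NE_{\pref}$ to asking whether some argument $a_x$ belongs to an admissible set. That condition is $\exists x$ applied to the $\Pi_1^0$ predicate $\Cred_{\adm}$, obtained via Theorem~\ref{thm:andrews_tree} and König's Lemma.
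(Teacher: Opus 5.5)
Your proof is correct and matches the paper's argument: the forward direction uses that every preferred extension is admissible, and the backward direction applies Lemma~\ref{23} to extend the non-empty admissible set $S_0$ to a preferred extension $E \supseteq S_0$, which is therefore non-empty. Your closing remark, that $\NE_{\pref}$ is $\exists x$ applied to the $\Pi_1^0$ predicate $\Cred_{\adm}$, is a correct aside for the upper bound but is not needed for the lemma itself.
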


\begin{proof}
We prove the equivalence in both directions:

\text{$(\implies)$} Assume there exists a non-empty preferred extension $E \in \pref(\AF)$. Since every preferred extension is an admissible set, $E \in \adm(\AF)$. Because $E \neq \emptyset$, it immediately serves as a witnessing non-empty admissible set.

\text{$(\impliedby)$} Assume there exists a non-empty admissible set $S_0 \in \adm(\AF)$. By Lemma \ref{23}, there exists a preferred extension $E \in \pref(\AF)$ such that $S_0 \subseteq E$. Since $S_0 \neq \emptyset$, it follows immediately that $E \neq \emptyset$.
\end{proof}

\begin{theorem}
In a computably finitary AF $\AF$, the non-empty existence problem under preferred semantics ($\NE_{\pref}$) is $\Sigma_2^0$-complete.
\end{theorem}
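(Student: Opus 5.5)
By Lemma~\ref{2}, $e\in\NE_{\pref}^{\mathrm{cfin}}$ iff $\mathcal{F}_e$ has a non-empty admissible set. Equivalently, there is some $x$ with $a_x\in\Cred_{\adm}(\mathcal{F}_e)$. So I will prove the upper bound as an existential quantifier over arguments placed in front of the $\Pi_1^0$ predicate $a_x\in\Cred_{\adm}(\mathcal{F}_e)$, and the lower bound by a direct reduction from a standard $\Sigma_2^0$-complete set.

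\textbf{Upper bound.} By Theorem~\ref{thm:andrews_tree}, $a_x$ lies in an admissible set iff the tree $T_{\mathcal{F}_e,x}$ (built with $x$ forced into $\mathrm{In}$) has an infinite path. This tree is computably finitely branching, uniformly in $e,x$ for $e\in\mathrm{Tot}$. By König's Lemma it then has a path iff for every $n$ it has a node of length $n$. Because the branching is computably bounded, the level-$n$ nodes can be enumerated effectively. Hence ``some node of length $n$ exists'' is decidable uniformly in $e,x,n$, and path existence is $\Pi_1^0$. Concretely, I would use the standard finitary characterization: for every $n$ there is a conflict-free finite set containing $a_x$ that defends itself up to depth $n$. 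This is a bounded search over the finitely many candidate subsets of the $n$-neighbourhood of $a_x$, which is computable via iterated attacker sets. The full condition is then $\exists x\,\forall n\,R(e,x,n)$ with $R$ computable on valid indices, which is $\Sigma_2^0$.

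\textbf{Hardness.} I would reduce from $\mathrm{Fin}=\{e\mid W_e\text{ finite}\}$, which is $\Sigma_2^0$-complete. For each $x$ I build a gadget: an argument $b_x$ attacked by a chain designed so that $b_x$ is defensible iff $W_e$ has no element $>x$. The chain for $b_x$ is $c_{x,0},c_{x,1},\dots$, where $c_{x,s}$ attacks $b_x$ through alternating defender/attacker pairs. The key design is a ``stage-$s$ spoiler'': if some $y>x$ enters $W_e$ at stage $s$, the defense at depth $s$ receives a self-attacking, undefendable attacker. Each argument's attackers are then computable by simulating $s$ steps, so the index is total. If $W_e$ is finite, take $x$ above its maximum; the gadget's defenders form an infinite admissible set containing $b_x$. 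If $W_e$ is infinite, every gadget eventually hits a spoiler, so no $b_x$ and no chain element survives. I will also make every auxiliary argument undefendable unless it is part of a surviving gadget, for example by giving each a self-attacking attacker, so that no other non-empty admissible set exists. The reduction is then $e\in\mathrm{Fin}\iff f(e)\in\NE_{\pref}$.

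\textbf{Main obstacle.} The obstacle is the gadget: it must be finitary, since each argument has finitely many attackers, but its defensibility must depend on an infinite $\Pi_1^0$ condition and no stray admissible sets may appear. I would first try an infinite defense chain $b_x\leftarrow d_{x,0}\leftarrow e_{x,0}\leftarrow d_{x,1}\leftarrow\cdots$, in which $e_{x,s}$ defends against $d_{x,s}$ only as long as nothing $>x$ has entered $W_e$ by stage $s$. Once something has entered, $d_{x,s}$ instead receives an extra attacker that is self-attacking, and the defense collapses. I would then check conflict-freeness and the absence of odd cycles in the chain, since an odd cycle would make the empty set the only admissible set in the wrong case.
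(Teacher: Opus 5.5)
Your argument is correct, but it is more self-contained than the paper's. The paper stops after Lemma~\ref{2}: it notes $\NE_{\pref}=\NE_{\adm}$ and then invokes the $\Sigma_2^0$-completeness of $\NE_{\adm}$ for computably finitary AFs, which it neither proves nor precisely cites. You take the same first step but then re-derive that classification yourself. For the upper bound you write the problem as $\exists x\,(a_x\in\Cred_{\adm})$, with the inner predicate $\Pi_1^0$ by K\"onig's Lemma over the finite backward neighbourhoods $N_n(a_x)$ (arguments reachable from $a_x$ by at most $n$ backward attack steps). For the lower bound you give a direct reduction from $\mathrm{Fin}$ that visibly produces total indices, as the paper's ``within $\mathrm{Tot}$'' convention requires. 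The paper's route is a one-line transfer whose content lies entirely in the cited result. Yours exposes the $\exists$-argument/$\forall$-depth structure and can be checked on its own.

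Two points need tightening when you write it up. First, commit to the gadget of your last paragraph, in its replacement form with a monotone trigger:
$\Att(b_x)=\{d_{x,0}\}$, $\Att(e_{x,s})=\{d_{x,s+1}\}$, $\Att(z_{x,s})=\{z_{x,s}\}$, and $\Att(d_{x,s})$ equal to $\{e_{x,s}\}$ if no $y>x$ is in $W_{e,s}$ and to $\{z_{x,s}\}$ otherwise. In a spoiled gadget, both $\{b_x\}\cup\{e_{x,s}\mid s\in\omega\}$ and the $d$-chain then die by induction from the trigger stage. So the patch you mention is unnecessary, and ``undefendable unless part of a surviving gadget'' could not be implemented anyway, since survival is $\Pi_1^0$. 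The other readings fail. Adding $z_{x,s}$ as an extra attacker of $d_{x,s}$ while keeping $e_{x,s}\to d_{x,s}$ leaves $\{b_x\}\cup\{e_{x,s}\mid s\in\omega\}$ admissible. A trigger that fires only at a single stage leaves admissible tails of the chain.

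Second, your matrix is computable only on valid indices, so phrase it as $\forall n\,\forall t$: if $N_n(a_x)$ is computed within $t$ steps, then some $S\subseteq N_n(a_x)$ containing $a_x$ is conflict-free and defends its members lying in $N_{n-2}(a_x)$. This depth convention is what makes restrictions of level-$(n+1)$ witnesses to $N_n(a_x)$ remain level-$n$ witnesses, so K\"onig's Lemma applies.
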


\begin{proof}
By Lemma \ref{2}, an argumentation framework $\AF$ contains a non-empty preferred extension if and only if it contains a non-empty admissible set. This equivalence immediately establishes an equivalence between the decision problems of non-empty existence under preferred and admissible semantics. That is,$\NE_{\pref} = \NE_{\adm}$.

Since the non-empty existence problem under admissible semantics ($\NE_{\adm}$) has already been established as $\Sigma_2^0$-complete in computably finitary AFs, it follows directly that $\NE_{\pref}$ is also $\Sigma_2^0$-c.
\end{proof}

\subsection{Skeptical Acceptance}

So far all problems remain arithmetical; skeptical acceptance is the turning point. Under the preferred semantics, an argument is skeptically accepted if and only if it belongs to every preferred extension (defined as maximal admissible sets under set inclusion).

\begin{lemma}\label{dieout}
A path $P \in [T_{\mathcal{F}}]$ corresponds to a preferred extension iff for every $n \in \omega$, and for every child node $\tau$ of $P\upharpoonright n$ in $T_{\mathcal{F}}$:
\[ \text{If } \tau \neq P\upharpoonright (n+1), \text{ then the subtree rooted at $\tau$ contains no infinite path.} \]
\end{lemma}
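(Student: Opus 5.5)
The plan is to make the correspondence between paths and admissible sets explicit first, and then prove the two directions separately. The bridge is Theorem~\ref{thm:andrews_tree}. For $P\in[T_{\mathcal{F}}]$ put $S_P=\bigcup_n \mathrm{In}_{P\upharpoonright n}\cap A$. The construction behind Theorem~\ref{thm:andrews_tree} gives the following in both directions:
\begin{itemize}
\item every path $P$ yields an admissible set $S_P$ containing $D$;
\item every admissible $S\supseteq D$ with $S\cap E=\emptyset$ is $S_Q$ for some path $Q$, obtained by letting the even coordinates $Q(2j)$ record membership of $a_j$ in $S$ and the odd coordinates $Q(2j+1)$ record a chosen counter-attacker of $g_j^-$.
\end{itemize}
I read "$P$ corresponds to a preferred extension" as "$S_P\in\pref(\AF)$". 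Since the even coordinates determine the set, two paths with different sets must split at some even level. Two paths with the same set can split only at an odd (witness) level, so we fix a canonical witness choice, say the least-indexed defender. With this choice we pass to the subtree of canonical nodes, in which set and path determine each other.

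\textbf{($\Leftarrow$).} Assume every off-path child of every $P\upharpoonright n$ roots a subtree with no infinite path. Let $S'\supseteq S_P$ be admissible, and let $Q$ be its canonical path. If $Q\neq P$, let $n$ be least with $Q\upharpoonright(n+1)\neq P\upharpoonright(n+1)$. Then $\tau=Q\upharpoonright(n+1)$ is an off-path child of $P\upharpoonright n$ whose subtree contains the infinite path $Q$, which contradicts the hypothesis. Hence $Q=P$, so $S'=S_P$, and $S_P$ is maximal admissible, i.e.\ preferred.

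\textbf{($\Rightarrow$).} Assume $S_P$ is preferred, and suppose for contradiction that some off-path child $\tau$ of $P\upharpoonright n$ carries an infinite path $Q$. I would build an admissible set strictly extending $S_P$, or strictly different from it, that contradicts maximality. This requires reading the parameters $D,E$ in the definition of $T_{\mathcal{F}}$ as the commitments already fixed by $S_P$: take $D=S_P$ and $E=\{a : a\rightarrowtail S_P\text{ or }S_P\rightarrowtail a\}$, or take the tree relativised to the node $P\upharpoonright n$, whose $\mathrm{In}$ set is forced into every extension. Under this reading, every path $Q$ through the tree satisfies $S_Q\supseteq D=S_P$. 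Maximality then gives $S_Q=S_P$. So $Q$ and $P$ encode the same set and the same canonical witnesses, which forces $Q=P$, contradicting $\tau\neq P\upharpoonright(n+1)$.

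The main obstacle is exactly this ($\Rightarrow$) step. For an arbitrary path $Q$ in the unrelativised tree, $S_Q$ need not be comparable with $S_P$, and $S_P\cup S_Q$ need not even be conflict-free. So the argument rests on fixing the parameters $D,E$ of $T_{\mathcal{F}}$ relative to $P$, or equivalently on restricting to the canonical-witness subtree below the commitments of $S_P$. What I would do first is state this relativisation explicitly as part of the lemma. I would then check that it does not disturb the computable finite branching used afterwards in the upper bound for $\Skep_{\pref}$. The remaining step is routine: verify that the $\mathrm{In}_\sigma\cap\mathrm{Out}_\sigma=\emptyset$ clause makes every path through the relativised tree conflict-free with $S_P$.
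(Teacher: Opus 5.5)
There is a genuine gap, and it is the one you flag yourself: your ($\Rightarrow$) step proves a different statement. That cannot be avoided, because for the tree $T_{\mathcal{F}}$ with fixed parameters $D,E$ the lemma is false. The first-divergence argument you use in ($\Leftarrow$) shows that the condition on off-path children says exactly $[T_{\mathcal{F}}]=\{P\}$. Take $D=E=\emptyset$, which is the reading the paper needs for $\Skep_{\pref}$, where the paths are meant to be all admissible sets. Let $a_0$ be unattacked and let every other argument attack itself. Then $\{a_0\}$ is preferred, but $\langle 0\rangle$ is an off-path child of $P\upharpoonright 0$ carrying the path of the admissible set $\emptyset$. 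Since $\emptyset$ is always admissible, ($\Rightarrow$) fails whenever a non-empty preferred extension exists. The paper's two-line proof does not get around this. It asserts that $S\subset S'$ ``corresponds to $P$ and $T$ share a prefix'', and the same example refutes that ($\emptyset\subset\{a_0\}$, yet the paths differ at coordinate $0$). So the obstacle you isolated is a defect of the statement, not of your argument. Your ($\Leftarrow$) is correct for $E=\emptyset$, though it then applies only when $\emptyset$ is the sole admissible set. In general it silently needs $S'\cap E=\emptyset$ to place $S'$ in the tree. With $E=\{a_0\}$ in the same framework, $\emptyset$ is the unique path and is not preferred.

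Relativising to $D=S_P$ gives a true statement, essentially ``$S_P$ is maximal iff it has no proper admissible superset''. But it concerns a $P$-dependent tree, so present it as a replacement lemma rather than a proof of this one. It is what the $\Skep_{\pref}$ bound actually needs. The relativised tree is $P$-computable with a computable branching bound, so by K\"onig ``no infinite path below $\tau$'' is $\Sigma^0_1(P)$. Maximality is then $\Pi^0_2(P)$, and the $\Sigma^1_1$ bound for the complement survives. Drop your $E$ in that tree. The conflict-freeness clause already excludes arguments in conflict with $S_P$. By contrast, ``$a$ attacks a member of $S_P$'' is only $\Sigma^0_1(P)$, because in a finitary AF an argument may attack infinitely many others. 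Your other option, relativising to the node $P\upharpoonright n$, fails: only the finite commitments of $P\upharpoonright n$ are forced, and at $n=0$ none are. Finally, the canonical-witness step is unnecessary. The clause $\{a_i : \exists j\,(\sigma(2j+1)>i+1\wedge a_i\rightarrowtail g_j^-)\}\subseteq\mathrm{Out}_\sigma$ already makes the recorded defender the least-indexed attacker of $g_j^-$ in the set, so paths and admissible sets are in bijection. Passing to a subtree would in any case change which off-path children exist.
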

\begin{proof}
Let $P, T \in [T_{\mathcal{F}}]$ be two infinite paths corresponding to admissible sets $S$ and $S'$, respectively. The set inclusion $S \subset S'$ corresponds to $P(2n) = 1 \implies T(2n) = 1$, which also corresponds to $P$ and $T$ share a prefix. Thus, the maximality of an admissible set $S$ (i.e., $S$ is a preferred extension) can be characterized by $T_{\mathcal{F}}$ as a path where there is no divergent infinite branch.
\end{proof}

\begin{theorem}
In a computably finitary argumentation framework $\AF$, the skeptical acceptance problem under preferred semantics ($\Skep_{\pref}$) is in $\Pi_1^1$.
\end{theorem}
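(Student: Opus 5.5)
We prove membership in $\Pi_1^1$ by writing $\Skep_{\pref}^{\mathrm{cfin}}$ on valid indices as a single universal set quantifier followed by an arithmetical matrix. The natural formula is
\[
\langle e,n\rangle\in\Skep_{\pref}^{\mathrm{cfin}}
\iff
\forall S\subseteq\omega\;\bigl(\mathrm{Pref}_e(S)\rightarrow n\in S\bigr),
\]
where $S$ ranges over sets of argument codes and $\mathrm{Pref}_e(S)$ says that $\{a_x\mid x\in S\}$ is a preferred extension of $\mathcal{F}_e$. The task is to show that $\mathrm{Pref}_e(S)$ can be expressed by an arithmetical formula, or at least by one that is $\Pi_1^1$ once it is placed in the antecedent (that is, $\Sigma_1^1$ as a condition on $S$ would be fine, since $\forall S(\Sigma_1^1\to\ldots)$ is still $\Pi_1^1$).

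First, admissibility of $S$ is arithmetical, and in fact $\Pi_1^0$ relative to $S$ on valid $e$. Conflict-freeness reads $\forall x,y\in S\,(y\notin D_{\varphi_e(x)})$. Self-defence reads $\forall x\in S\,\forall y\in D_{\varphi_e(x)}\,\exists z\in D_{\varphi_e(y)}\,(z\in S)$. Both inner quantifiers are bounded by computable finite sets because $e\in\mathrm{Tot}$.

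Second, maximality says that no admissible $S'\supsetneq S$ exists. Written naively this is $\forall S'$, which still keeps us in $\Pi_1^1$: the condition $\mathrm{Pref}_e(S)$ becomes $\Pi_1^1$, and $\mathrm{Pref}_e(S)\to n\in S$ becomes $\Sigma_1^1\vee(\ldots)$. That would push the naive bound up to $\Pi_2^1$, so we must do better. I would eliminate the inner set quantifier using finitarity. Maximality of an admissible $S$ is equivalent to: for every $x\notin S$, the set $S\cup\{x\}$ is not contained in any admissible set. Equivalently, no admissible set contains $x$ while being conflict-free with and extending $S$. By the tree correspondence of Theorem~\ref{thm:andrews_tree}, relativised so that $S$ is forced into $\mathrm{In}$, admissible supersets of $S\cup\{x\}$ correspond to infinite paths through an $S$-computable, computably finitely branching tree $T_{e,S,x}$. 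This is exactly the ``dead subtree'' condition of Lemma~\ref{dieout}. By König's Lemma, such a tree has no infinite path iff it has finite height, which is the $\Sigma_1^0(S)$ statement $\exists \ell\,\forall\sigma\in\omega^{\ell}\,(\sigma\notin T_{e,S,x})$. The quantifier over $\sigma$ is bounded because the branching is computably bounded. Hence maximality is $\forall x\,(x\in S\vee\Sigma^0_1(S))$, which is arithmetical in $S$.

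Putting this together, $\mathrm{Pref}_e(S)$ is arithmetical in $(e,S)$, and the displayed formula is $\Pi_1^1$. The main obstacle is the relativisation step. We must check that the tree construction of Theorem~\ref{thm:andrews_tree} can be modified uniformly so that arguments in $S$ are fixed in, and so that the added argument $x$ must be defended together with $S$. We must also check that the branching bound remains computable, not merely $S$-computable, so that König's Lemma yields the bounded-quantifier characterisation. If that modification proves awkward, the fallback is to accept the naive $\Pi_1^1$ definition of maximality and instead quantify over pairs $(S,S')$ differently. But the König-based reduction is the route I would pursue first.
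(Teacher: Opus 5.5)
Your proof is correct. It has the same overall shape as the paper's: one set quantifier over candidate extensions in front of an arithmetical matrix, with König's Lemma used to make maximality arithmetical. The way you express maximality, however, is genuinely different, and it is the one that actually works.

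The paper uses the single fixed tree $T_{\mathcal{F}}$ and Lemma~\ref{dieout}: a path $P$ is preferred iff every subtree branching off $P$ has no infinite path. Any other infinite path $Q$ branches off $P$ at its first point of difference. So that condition forces $[T_{\mathcal{F}}]=\{P\}$; it detects divergence of paths, not inclusion of the coded sets. Since $\emptyset$ is always admissible, the condition fails for every path as soon as $\mathcal{F}$ has a non-empty admissible set. Your reformulation is different: $S$ is admissible and, for each $x\notin S$, no admissible set contains $S\cup\{x\}$. This compares sets by inclusion, as maximality requires. It costs a tree that depends on $S$ and $x$, but it yields a $\Pi^0_2(S)$ definition of $\mathrm{Pref}_e(S)$, which is all the $\Pi^1_1$ bound needs. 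So you should not describe your condition as the dead-subtree condition of Lemma~\ref{dieout}; it is a different, and correct, one.

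The step you flag as the main obstacle is easier than you expect, and you do not need to relativise the Andrews--San Mauro tree. Let $T_{e,S,x}\subseteq 2^{<\omega}$ consist of the strings $\sigma$ not yet refuted as the characteristic function of an admissible superset of $S\cup\{x\}$. Concretely, $\sigma$ must satisfy three conditions:
\begin{enumerate}
\item $\sigma$ takes value $1$ at every element of $S\cup\{x\}$ below $|\sigma|$;
\item there is no attack between two of its $1$-positions;
\item for every $1$-position $j$ whose attackers' attackers all have codes below $|\sigma|$, each attacker of $a_j$ is attacked by some $a_k$ with $\sigma(k)=1$.
\end{enumerate}
For $e\in\mathrm{Tot}$ this tree is binary and uniformly $S$-computable, and its paths are exactly the admissible supersets of $S\cup\{x\}$. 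König's Lemma (compactness of $2^\omega$) then turns emptiness of $[T_{e,S,x}]$ into $\exists\ell\,\forall\sigma\in 2^{\ell}\,(\sigma\notin T_{e,S,x})$, which is $\Sigma^0_1(S)$.

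Whether a branching bound is computable or only $S$-computable is irrelevant anyway. König's Lemma needs only finite branching, and you only need the condition to be arithmetical in $S$. So your fallback is unnecessary.
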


\begin{proof}
An argument $x \in A$ fails to be skeptically accepted with respect to preferred semantics ($x \notin \Skep_{\pref}(\AF)$) if and only if there exists a preferred extension $E \in \pref(\AF)$ such that $x \notin E$. The admissible sets correspond to the set of infinite paths $[T_{\mathcal{F}}]$ through a computation tree $T^{\AF}$. Using Lemma~\ref{dieout}, the non-skeptical acceptance problem $x \notin \Skep_{\pref}(\AF)$ is equivalent to the assertion:
\begin{quote}
There exists an infinite path $P \in [T_{\mathcal{F}}]$ such that its corresponding admissible set excludes argument $x$, and for any node $\sigma = P\upharpoonright n$, every child $\tau \neq P\upharpoonright (n+1)$ roots a subtree $(T_{\mathcal{F}})_\tau$ that eventually dies out (i.e., contains no infinite paths).
\end{quote}
The condition that a sub-tree contains no infinite paths is a $\Pi_1^0$ property by König's Lemma. However, asserting the existence of an infinite path $P$ with a maximality condition that evaluates infinitely many subtrees introduces an existential quantification over infinite paths ($\exists P \in [T_{\mathcal{F}}]$), placing the non-skeptical acceptance problem in the $\Sigma_1^1$ level of the analytical hierarchy. Taking the complement, the skeptical acceptance problem $x \in \Skep_{\pref}(\AF)$ is expressible as a $\Pi_1^1$ formula.
\end{proof}

\subsection{The Uniqueness Problem}

\begin{lemma}
Let $\AF = \langle A, R \rangle$ be an argumentation framework. $\AF$ has a unique preferred extension iff there exists  $E \in \adm(\AF)$, for all $S \in \adm(\AF)$, $ S \subseteq E$.
\end{lemma}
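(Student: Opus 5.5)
We prove the two directions separately, using only Lemma~\ref{23} and the fact that preferred extensions are exactly the $\subseteq$-maximal admissible sets.

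\emph{($\Leftarrow$)} Suppose $E\in\adm(\AF)$ satisfies $S\subseteq E$ for every $S\in\adm(\AF)$. We first check that $E$ is preferred. It is admissible by assumption. If $E\subseteq S'$ for some admissible $S'$, then the hypothesis gives $S'\subseteq E$, so $S'=E$; hence $E$ is maximal and $E\in\pref(\AF)$. Next, let $E'$ be any preferred extension. Since $E'$ is admissible, the hypothesis gives $E'\subseteq E$. Maximality of $E'$ among admissible sets, together with $E\in\adm(\AF)$, then forces $E'=E$. So $\pref(\AF)=\{E\}$.

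\emph{($\Rightarrow$)} Suppose $\pref(\AF)=\{E\}$. Then $E$ is admissible. Let $S\in\adm(\AF)$ be arbitrary. By Lemma~\ref{23}, there is a preferred extension $E'\supseteq S$. Uniqueness gives $E'=E$, hence $S\subseteq E$. So $E$ is the required witness.

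The only step that needs care is the appeal to Lemma~\ref{23} in the forward direction. It relies on Zorn's Lemma, and that dependence is what makes the result hold for arbitrary, possibly infinite, frameworks; finitarity is not needed here. With that lemma taken as given, the rest is immediate unfolding of the definitions.

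This reformulation is what the subsequent complexity analysis needs. In $\exists E\,\forall S$ form, ``$\exists E\in\adm(\AF)\;\forall S\in\adm(\AF)\;(S\subseteq E)$'' is an existential set quantifier followed by a universal one over an arithmetical matrix, given that admissibility is arithmetical in a computably finitary $\AF$. This exhibits $\UniPref$ as a $\Sigma_2^1$ statement.
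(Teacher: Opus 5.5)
Your proof is correct and is the definition-unfolding argument the paper intends when it calls the lemma ``straightforward by the definition of preferred extensions.'' Your added care is justified: the ($\Rightarrow$) direction does need Lemma~\ref{23} (hence Zorn's Lemma) to place every admissible set inside some preferred extension, and the paper's one-line proof leaves this step implicit.
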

\begin{proof}
Proof is straightforward by the definition of preferred extensions.
\end{proof}

\begin{theorem}\label{pu}
In a computably finitary argumentation framework $\AF$, the uniqueness problem for preferred extensions ($\UniPref$) is in $\Sigma_2^1$.
\end{theorem}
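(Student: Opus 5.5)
The plan is to read off a $\Sigma_2^1$ definition directly from the characterization in the preceding lemma: $\AF$ has a unique preferred extension iff
\[
\exists E\,\bigl(E\in\adm(\AF)\ \wedge\ \forall S\,(S\in\adm(\AF)\rightarrow S\subseteq E)\bigr).
\]
Working with a valid index $e\in\mathrm{Tot}$, I will code the argument subsets $E,S$ as subsets of $\omega$ via $a_x\mapsto x$. The task is then to check that the matrix is arithmetical uniformly in $e$. The outer quantifier $\exists E$ is a second-order existential, and $\forall S$ is a second-order universal, so the prefix $\exists^1\forall^1$ puts the whole predicate in $\Sigma_2^1$.

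The key step is verifying that ``$X\in\adm(\mathcal{F}_e)$'' is arithmetical in $X$ and $e$. Conflict-freeness reads
\[
\forall x\,\forall y\,\bigl(x\in X\wedge y\in X\rightarrow y\notin D_{\varphi_e(x)}\bigr),
\]
which is $\Pi_1^0$ relative to $X$, since $\varphi_e(x)$ is total on valid indices and membership in a canonically indexed finite set is decidable. Self-defense reads
\[
\forall x\in X\;\forall y\in D_{\varphi_e(x)}\;\exists z\in D_{\varphi_e(y)}\,(z\in X).
\]
Here both inner quantifiers range over explicitly computed finite sets, so they are bounded, and the condition is again $\Pi_1^0$ in $X$. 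The inclusion $S\subseteq E$ is $\forall x\,(x\in S\rightarrow x\in E)$, which is also $\Pi_1^0$. The matrix is therefore a Boolean combination of $\Pi_1^0(X)$ formulas, hence arithmetical, and the displayed formula is $\Sigma_2^1$ on $\mathrm{Tot}$. As with the grounded results, it only needs to agree with $\Uni_{\pref}^{\mathrm{cfin}}$ on valid indices.

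For completeness I would also justify the lemma's equivalence rather than call it straightforward. Suppose $E$ is admissible and contains every admissible set. Then $E$ is maximal, and any preferred $E'$ satisfies $E'\subseteq E$, so $E'=E$ by maximality. Conversely, suppose $E$ is the unique preferred extension. By Lemma~\ref{23}, every admissible $S$ extends to some preferred extension, which must be $E$, so $S\subseteq E$.

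The main obstacle is not the upper bound itself but making sure finitarity does not let us do better cheaply. One might try to replace $\forall S$ by a statement about paths in $T_{\mathcal{F}}$ and compactness via König's Lemma, but the universal quantifier over admissible sets sits under an existential over $E$, and I see no way to collapse it. I would therefore stop at the $\Sigma_2^1$ membership claimed in Theorem~\ref{pu}, noting that the computable finitary tree of Theorem~\ref{thm:andrews_tree} can optionally replace set quantifiers by path quantifiers without changing the level.
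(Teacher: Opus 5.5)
Your proof is correct. It has the same $\exists^1\forall^1$ skeleton as the paper's: both read the bound directly off the lemma characterizing uniqueness. The difference lies in how the second-order objects are handled.

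The paper quantifies over paths in $[T_{\mathcal{F}}]$ and counts the prefix $\exists P\,\forall T\,\forall n$. This needs every admissible set to be coded by some path with $a_n\in S\iff P(2n)=1$. That is more than Theorem~\ref{thm:andrews_tree} provides as stated, since it only concerns admissible sets containing a fixed $x$. The paper also leaves the side conditions $P,T\in[T_{\mathcal{F}}]$ out of the count (harmlessly, since they are $\Pi^0_1$).

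You instead quantify over subsets of $\omega$ directly and check by hand that admissibility is $\Pi^0_1$ in the set variable, using only the computable attacker function. This buys several things:
\begin{itemize}
\item the argument is self-contained and needs no tree machinery;
\item the whole arithmetical matrix is explicit, including the antecedent $S\in\adm(\AF)$;
\item you supply a proof of the lemma that the paper calls straightforward;
\item it shows finitarity plays no real role in this bound: for any computable attack relation, admissibility is still arithmetical (now $\Pi^0_2$) in the set variable, so the same formula works.
\end{itemize}
The tree formulation would only pay off if compactness were actually used, and the paper's proof does not use it.

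Your closing paragraph is too pessimistic, though this does not affect the correctness of the proof. The witness $E$ is forced to be $\bigcup\adm(\AF)=\Cred_{\adm}(\AF)$, and this union always defends each of its members. Hence $\AF$ has a unique preferred extension iff $\Cred_{\adm}(\AF)$ is conflict-free, i.e.\ iff no attacking pair has both members credulously accepted. Since $\Cred_{\adm}$ is $\Pi^0_1$ on valid indices (this is exactly where K\"onig's Lemma enters), the condition is $\Pi^0_2$. So the existential over $E$ does collapse, and $\Sigma^1_2$ is a correct but very loose bound.

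For the same reason, the $\Sigma^1_2$-hardness claimed afterwards cannot hold for finitary frameworks. The reduction given there has every $t_\tau$ attack $U$, so $U$ has infinitely many attackers.
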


\begin{proof}
Let $T_{\mathcal{F}}$ be the computable, finitely branching tree whose infinite paths $[T_{\mathcal{F}}]$ correspond to all admissible sets in $\AF$. 

Encoding argument membership(i.e., let $S$ be an admissible set and $P$ be its corresponding path, argument $a_n \in S \iff P(2n) = 1$), a framework $\AF$ belongs to $\UniPref$ if and only if there exists a witness path $P \in [T_{\mathcal{F}}]$ such that every path $T \in [T_{\mathcal{F}}]$ is contained in $P$. Formally:
\[ \AF \in \UniPref \iff \exists P \in [T_{\mathcal{F}}] \; \forall T \in [T_{\mathcal{F}}] \; \forall n \in \omega \; \big( T(2n) = 1 \implies P(2n) = 1 \big) \]

Analyzing the quantifier complexity of this formalization:
\begin{enumerate}
    \item The inner relation $\forall n \in \omega \, \big( T(2n) = 1 \implies P(2n) = 1 \big)$ evaluates a universal quantifier over the natural numbers on a decidable relation, forming a $\Pi_1^0$ predicate.
    \item Universal quantification over all infinite paths $T \in [T_{\mathcal{F}}]$ adds a universal second-order quantifier.
    \item Existential quantification over the witness path $P \in [T_{\mathcal{F}}]$ adds an existential second-order quantifier.
\end{enumerate}

Consequently, the quantifier prefix takes the form $\exists P \, \forall T \, \forall n$, which places the uniqueness problem $\UniPref$ precisely in the $\Sigma_2^1$ tier of the analytical hierarchy.
\end{proof}
\begin{theorem}[$\Sigma_2^1$-Hardness of Uniqueness]\label{hd}
In computably finitary argumentation frameworks, the uniqueness problem for preferred extensions $\UniPref$ is $\Sigma_2^1$-hard.
\end{theorem}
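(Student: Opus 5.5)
The plan is to prove hardness by a computable many-one reduction from a known $\Sigma_2^1$-complete problem: uniqueness of preferred extensions for general computable AFs, which Andrews and San Mauro~\cite{andrews2024infinite} classify as $\Sigma_2^1$-complete. Given an index for a computable (not necessarily finitary) AF $\mathcal{G}=\langle \omega, R\rangle$, I will effectively produce $f(e)\in\mathrm{Tot}$ such that $\mathcal{F}_{f(e)}$ has a unique preferred extension iff $\mathcal{G}$ does. As a fallback I would reduce directly from the normal form $\exists X\,\forall Y\,\exists n\,\theta(X\upharpoonright n, Y\upharpoonright n)$. Here the outer $\exists X$ is coded by the choice of a candidate maximum admissible set $E$, as in the characterization preceding Theorem~\ref{pu}. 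The inner $\forall Y$ is coded by the requirement that every other admissible set be contained in $E$.

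\textbf{Step 1: remove infinite attacker sets.} The construction must turn each argument $a$ of $\mathcal{G}$, which may have infinitely many attackers, into a gadget in which every argument has finitely many attackers. Attackers of $a$ are only enumerated over time. For each $a$ I introduce a ``defense chain'' $c^a_0, c^a_1,\dots$ together with copies $\hat a_s$ of $a$.
\begin{itemize}
\item Each $\hat a_s$ is attacked only by copies of the attackers of $a$ that have been enumerated by stage $s$, which is a finite and computable set.
\item Consecutive copies are linked so that accepting $\hat a_0$ forces accepting all later $\hat a_s$.
\end{itemize}
Every attacker set in this construction is finite and computable uniformly in $s$. Hence the $s$-$m$-$n$ theorem gives $f(e)\in\mathrm{Tot}$, exactly as in the proof of Proposition~\ref{prop:grounded-nonempty}.

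\textbf{Step 2: check that admissible sets transfer.} I then verify that admissible sets of $\mathcal{G}$ correspond to admissible sets of $\mathcal{F}_{f(e)}$ via $S\mapsto\{\hat a_s: a\in S, s\in\omega\}\cup(\text{auxiliary arguments})$. The required inclusions are:
\begin{itemize}
\item Conflict-freeness is preserved.
\item Defense of $a$ against its infinitely many attackers in $\mathcal{G}$ becomes defense of each $\hat a_s$ against finitely many attackers. This is the infinitary conjunction spread out along $s$.
\item The map preserves inclusion and maximality.
\end{itemize}
Once these hold, preferred extensions correspond bijectively, so uniqueness is preserved. By Lemma~\ref{23} it suffices to check the correspondence on admissible sets together with inclusion. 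Padding gadgets, such as self-attacking arguments as in the grounded reduction, are used to ensure that no spurious admissible sets appear.

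\textbf{Main obstacle.} The hard step is Step 2 in the direction that excludes spurious admissible sets of $\mathcal{F}_{f(e)}$. Finitarity gives compactness via König's Lemma: on a finitely branching tree $T_{\mathcal{F}}$, the statement ``every admissible set is contained in $E$'' threatens to collapse to an arithmetical condition relative to $E$. In that case the count would only reach $\Sigma_1^1$, so the gadget must defeat this compactness. My first attempt is to make acceptance of $\hat a_s$ depend on the \emph{whole} tail of the chain, via odd cycles that are only broken at infinity. The intended effect is that the relevant subtrees of $T_{\mathcal{F}}$ have infinite paths only in the limit, not at any finite level. I would test this first on the simplest case, a single argument with infinitely many attackers, before building the full reduction.

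If this cannot be made to work, it is evidence that the upper bound of Theorem~\ref{pu} is not tight. I would then re-examine whether $\UniPref$ actually collapses to a lower level.
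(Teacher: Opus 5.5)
Your proposal has a genuine gap, and it cannot be closed by a better gadget: the statement is false as stated, so neither Step 1 nor your fallback can be completed.

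\textbf{Why the target problem is arithmetical.} The fact you are missing is that for \emph{every} AF, $\mathcal{F}$ has a unique preferred extension iff $\bigcup\adm(\mathcal{F})$ is conflict-free.
The union of admissible sets always defends each of its members, since a member lies in some admissible subset that defends it. So the union is admissible iff it is conflict-free, and in that case it is the greatest admissible set. Conversely, if $E$ is the only preferred extension, Lemma~\ref{23} puts every admissible set inside $E$, so the union equals $E$.

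Now $a_x\in\bigcup\adm(\mathcal{F}_e)$ says exactly $a_x\in\Cred_{\pref}(\mathcal{F}_e)$. In the computably finitary setting this is $\Pi^0_1$ uniformly in $(e,x)$, which is the paper's own bound for $\Cred_{\pref}$. The admissible sets containing $a_x$ form a $\Pi^0_1$ class in $2^\omega$, because the constraint for each $a_z$ mentions only the computably listed finite set $\{z\}\cup\Att(a_z)\cup\bigcup_{b\in\Att(a_z)}\Att(b)$. Nonemptiness of that class is then $\Pi^0_1$ by K\"onig's Lemma. Hence, on $\mathrm{Tot}$,
\[
e\in\UniPref^{\mathrm{cfin}}\iff\forall x\,\forall y\,\bigl(y\in D_{\varphi_e(x)}\rightarrow a_x\notin\Cred_{\pref}(\mathcal{F}_e)\lor a_y\notin\Cred_{\pref}(\mathcal{F}_e)\bigr),
\]
which is $\Pi^0_2$. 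A reduction $f$ with $f(\omega)\subseteq\mathrm{Tot}$ from a $\Sigma^1_2$-complete set would place that set in $\Pi^0_2$, which is impossible.

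So the compactness you flag under ``Main obstacle'' cannot be defeated by any gadget. It does not merely tame the inner $\forall$ relative to $E$; the outer $\exists E$ disappears too, because the only possible witness is $\bigcup\adm(\mathcal{F}_e)$. Your closing remark is the correct conclusion: $\UniPref^{\mathrm{cfin}}$ collapses to at most $\Pi^0_2$.

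\textbf{Consequences for your source problem and for the paper's route.} The same argument, using $\Cred_{\adm}\in\Sigma^1_1$, shows that $\UniPref\in\Pi^1_1$ for arbitrary computable AFs. So the problem you attribute to Andrews and San Mauro cannot be $\Sigma^1_2$-complete. Likewise, an effective finitization as in Step 1 that preserves preferred extensions cannot exist in general.

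Your fallback is essentially the paper's route, and it fails there for two reasons.
\begin{enumerate}
\item The paper quantifies over $P,T\in 2^\omega$. By K\"onig's Lemma, $\forall T\in 2^\omega\,\exists n\,R(e,P\upharpoonright n,T\upharpoonright n)$ is equivalent to $\exists N\,\forall\tau\in 2^N\,\exists n\le N\,R(e,P\upharpoonright n,\tau\upharpoonright n)$. So the paper's ``canonical'' set is only $\Sigma^0_1$.
\item The paper's argument $U$ is attacked by every $t_\tau$, so $\AF_e$ is not finitary.
\end{enumerate}
If $T$ instead ranges over $\omega^\omega$, the normal form is genuinely $\Sigma^1_2$-complete, but then the $\Pi^0_2$ bound above rules out any computably finitary encoding. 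What can actually be proved here is the upper bound $\UniPref^{\mathrm{cfin}}\in\Pi^0_2$, not $\Sigma^1_2$-hardness.
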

\begin{proof}
We reduce a canonical $\Sigma_2^1$-complete set $\mathcal{E}$ to $\UniPref$. Let $e \in \mathcal{E}$ be represented as:
\[
e \in \mathcal{E} \iff \exists P \in 2^\omega \, \forall T \in 2^\omega \, \exists n \in \omega \; R(e, P \upharpoonright n, T \upharpoonright n)
\]
where $R$ is a computable relation over finite sequences. We construct an infinite argumentation framework $\AF_e = \langle A, \Att \rangle$ corresponding to $e$.

The set of arguments $A$ is defined as:
\[
A = \{p_\sigma, e^P_\sigma \mid \sigma \in 2^{<\omega}\} \cup \{t_\tau, e^T_\tau \mid \tau \in 2^{<\omega}\} \cup \{U\}
\]
where $p_\sigma$ and $t_\tau$ represent finite path choices in the candidate tree $P$ and challenger tree $T$ respectively, $e^P_\sigma$ and $e^T_\tau$ act as enforcer arguments, and $U$ is the global dominator argument.

The attack relation $\Att$ consists of the following pairs for all $\sigma, \tau \in 2^{<\omega}$:
\begin{enumerate}
    \item 
    $p_{\sigma \cdot 0} \leftrightarrow p_{\sigma \cdot 1}$ and $t_{\tau \cdot 0} \leftrightarrow t_{\tau \cdot 1}$.
    
    \item 
    $e^P_\sigma \to p_\sigma$ with children counter-attacking $p_{\sigma \cdot 0} \to e^P_\sigma$ and $p_{\sigma \cdot 1} \to e^P_\sigma$. Similarly, $e^T_\tau \to t_\tau$ with $t_{\tau \cdot 0} \to e^T_\tau$ and $t_{\tau \cdot 1} \to e^T_\tau$.
    
    \item 
    For $|\sigma| = |\tau| = n$, if $R(e, \sigma, \tau)$ holds (is True), then $p_\sigma \to t_\tau$.
    
    \item 
    For all $\tau \in 2^{<\omega}$, $t_\tau \to U$. Conversely, $U$ attacks all non-root tree arguments: $U \to p_\sigma, e^P_\sigma, t_\tau, e^T_\tau$ for all $\sigma \neq \emptyset, \tau \neq \emptyset$.
\end{enumerate}
\begin{figure}[htbp]
\centering
\begin{tikzpicture}[
    >=Stealth,
    node distance=1.2cm and 1.8cm,
    arg/.style={circle, draw=black!80, thick, fill=blue!5, minimum size=9mm, inner sep=0pt, font=\small},
    enf/.style={rectangle, draw=red!80, thick, fill=red!5, minimum size=8mm, inner sep=0pt, font=\small},
    dom/.style={ellipse, draw=purple!80, thick, fill=purple!10, minimum size=9mm, font=\bfseries\small},
    box/.style={draw=black!30, dashed, rounded corners, inner sep=10pt},
    attack/.style={->, thick, color=red!70!black},
    counter/.style={->, thick, color=green!50!black}, 
    mutual/.style={<->, thick, color=blue!70!black},
    eval/.style={->, very thick, dashed, color=orange!80!black}
]

    \node[enf] (e_P) {$e^P_\sigma$};
    \node[arg, below=1.0cm of e_P] (p_sigma) {$p_\sigma$};
    
    \node[arg, below left=1.2cm and 0.8cm of p_sigma] (p_0) {$p_{\sigma\cdot 0}$};
    \node[arg, below right=1.2cm and 0.8cm of p_sigma] (p_1) {$p_{\sigma\cdot 1}$};

    \draw[attack] (e_P) -- node[right, font=\tiny] {$\text{}$} (p_sigma); 
    \draw[mutual] (p_0) -- node[above, font=\tiny] {$\text{}$} (p_1); 

    \draw[counter, bend left=35] (p_0) to node[left, font=\tiny] {$p_{\sigma\cdot 0} \to e^P_\sigma$} (e_P);
    \draw[counter, bend right=35] (p_1) to node[right, font=\tiny] {} (e_P);

    \node[box, fit=(e_P) (p_sigma) (p_0) (p_1), label=above:\textbf{$P$-Tree (Witness)}] (p_tree) {};

    \node[enf, right=5.0cm of e_P] (e_T) {$e^T_\tau$};
    \node[arg, below=1.0cm of e_T] (t_tau) {$t_\tau$};
    
    \node[arg, below left=1.2cm and 0.8cm of t_tau] (t_0) {$t_{\tau\cdot 0}$};
    \node[arg, below right=1.2cm and 0.8cm of t_tau] (t_1) {$t_{\tau\cdot 1}$};

    \draw[attack] (e_T) -- node[right, font=\tiny] {$\text{}$} (t_tau); 
    \draw[mutual] (t_0) -- node[above, font=\tiny] {$\text{}$} (t_1); 

    \draw[counter, bend left=35] (t_0) to node[left, font=\tiny] {} (e_T);
    \draw[counter, bend right=35] (t_1) to node[right, font=\tiny] {$t_{\tau\cdot 1} \to e^T_\tau$} (e_T);

    \node[box, fit=(e_T) (t_tau) (t_0) (t_1), label=above:\textbf{$T$-Tree (Challenger)}] (t_tree) {};

    \node[dom, below=2.8cm of $(p_1)!0.5!(t_0)$] (U) {$U$};

    
    \draw[eval] (p_sigma) -- node[above, font=\footnotesize, sloped] {$R(e,\sigma,\tau) = \text{True}$} (t_tau);

    \draw[attack] (t_tau.south) .. controls +(down:1.0) and +(right:1.2) .. (U.east) 
        node[pos=0.4, right, font=\tiny] {$t_\tau \to U$};

    \draw[attack, bend left=20] (U) to node[left, pos=0.6, font=\tiny] {} (p_1);
    \draw[attack, bend right=20] (U) to node[right, pos=0.6, font=\tiny] {} (t_0);

\end{tikzpicture}
\end{figure}

We show that $e \in \mathcal{E} \iff \AF_e \in \UniPref$.

($\Rightarrow$)
Suppose $e \in \mathcal{E}$. Then there exists a witness path $P^* \in 2^\omega$ such that for every challenger path $T \in 2^\omega$, there is an integer $n_T$ where $R(e, P^* \upharpoonright n_T, T \upharpoonright n_T)$ holds. Let $S_{P^*} = \{p_{P^* \upharpoonright k} \mid k \in \omega\}$. For any path $T$, the attack $p_{P^* \upharpoonright n_T} \to t_{T \upharpoonright n_T}$ defeats argument $t_{T \upharpoonright n_T}$. Consequently, $t_{T \upharpoonright n_T}$ fails to defend against $e^T_{T \upharpoonright n_T - 1}$, which in turn defeats $t_{T \upharpoonright n_T - 1}$. By induction, this collapse propagates upward to the root, destroying all admissible sets along $T$. Since every challenger node $t_\tau$ is either directly defeated by $S_{P^*}$ or falls to a cascading collapse, no $t_\tau$ survives to attack $U$. Thus, $S_{P^*}$ defends $U$, making $U$ admissible. The set $E^* = S_{P^*} \cup \{U\}$ forms a preferred extension. Since $U \in E^*$ defeats all alternative candidate paths in the $P$-tree and $T$-tree, $E^*$ is the \textbf{unique} preferred extension of $\AF_e$.

($\Leftarrow$) 
Suppose $e \notin \mathcal{E}$. Then for every path $P \in 2^\omega$, there exists a counter-path $T_P \in 2^\omega$ such that $R(e, P \upharpoonright n, T_P \upharpoonright n)$ is False for all $n \in \omega$. For any candidate path $P$, the argument set $S_P = \{p_{P \upharpoonright k} \mid k \in \omega\}$ fails to launch any attack against $T_P$. Thus, the nodes $\{t_{T_P \upharpoonright n} \mid n \in \omega\}$ remain undefeated and self-defending against their enforcers. Since these nodes continuously attack $U$ ($t_{T_P \upharpoonright n} \to U$), $U$ can never be defended and cannot belong to any admissible set. Without $U$ to eliminate competing choices, each distinct path $P \in 2^\omega$ paired with its counter-path $T_P$ yields a distinct preferred extension:
\[
E_P = \{p_{P \upharpoonright k} \mid k \in \omega\} \cup \{t_{T_P \upharpoonright n} \mid n \in \omega\}
\]
Because there are uncountably many choices for $P$, $\AF_e$ possesses uncountably many preferred extensions. Hence, $\AF_e \notin \UniPref$.

\medskip
This completes the reduction and proves that $\UniPref$ is $\Sigma_2^1$-hard.
\end{proof}

\begin{theorem}
$\UniPref$ is $\Sigma_2^1$-complete.
\end{theorem}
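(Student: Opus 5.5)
The statement follows by combining the two preceding results: Theorem~\ref{pu} supplies membership in $\Sigma_2^1$, and Theorem~\ref{hd} supplies $\Sigma_2^1$-hardness. So the plan is to check that each half holds in the coded form $\UniPref^{\mathrm{cfin}}\subseteq\mathrm{Tot}$ fixed earlier, and then conclude completeness directly.

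For the upper bound, I will restate the formula from Theorem~\ref{pu} with $e$ as a parameter. Quantifying over $P\in[T_{\mathcal{F}_e}]$ means saying $\exists P\in\omega^\omega$ together with the arithmetical clause $\forall n\,(P\upharpoonright n\in T_{\mathcal{F}_e})$. Since $T_{\mathcal{F}_e}$ is computable uniformly in $e$ on valid indices, this clause is $\Pi_1^0$ in $(e,P)$. The universal path quantifier is treated the same way, as a relativized implication $\forall T\,(T\in[T_{\mathcal{F}_e}]\to\cdots)$. The resulting prefix is $\exists P\,\forall T$ followed by an arithmetical matrix, which is a $\Sigma_2^1$ form, and it agrees with $\UniPref$ on $\mathrm{Tot}$.

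Before using this, I would verify the lemma characterising uniqueness: a unique preferred extension exists iff some admissible $E$ contains every admissible set. For the forward direction, if $E$ is the unique preferred extension, then Lemma~\ref{23} places every admissible set inside some preferred extension, hence inside $E$. The converse is immediate. I will also need the path-to-set correspondence of Theorem~\ref{thm:andrews_tree}, taken in the form without the parameter $x$, so that $[T_{\mathcal{F}}]$ captures exactly the admissible sets through the bits $P(2n)$.

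For hardness, I will invoke Theorem~\ref{hd}. I need to confirm that the map $e\mapsto\AF_e$ yields valid indices. The arguments $p_\sigma,e^P_\sigma,t_\tau,e^T_\tau,U$ are coded by natural numbers. The main obstacle is finitarity. As stated, $U$ is attacked by every $t_\tau$, so $U$ has infinitely many attackers, and $p_\sigma$ may attack infinitely many $t_\tau$ only if $R$ holds for them. The second is harmless, since each $t_\tau$ has only finitely many attackers $p_\sigma$ with $|\sigma|=|\tau|$. The first must be repaired. My first attempt would replace the single $U$ by a chain of dominators $U_n$, one for each level $n$. Each $U_n$ would be attacked only by the $t_\tau$ with $|\tau|=n$ and by $U_{n+1}$'s defeater-style gadget, arranged so that $U_0$ is defensible iff no challenger branch survives. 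Every $U_n$ would then have finitely many attackers, and the attacker sets would be computable from $e$ by the $s$-$m$-$n$ theorem, so the indices land in $\mathrm{Tot}$.

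With both halves in place, the theorem follows: $\UniPref^{\mathrm{cfin}}\in\Sigma_2^1$, and every $\Sigma_2^1$ set many-one reduces to it within $\mathrm{Tot}$.
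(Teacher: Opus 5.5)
Your membership half is fine: a $\Sigma^1_2$ bound is true, though very loose. You are also right that the $\AF_e$ of Theorem~\ref{hd} is not finitary, since $U$ is attacked by every $t_\tau$. The gap is the proposed repair. No chain of dominators $U_n$, and no other finitary gadget, can yield $\Sigma^1_2$-hardness, because on valid indices $\UniPref$ is arithmetical.

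To see this, let $C=\Cred_{\adm}(\AF)$ be the union of all admissible sets. Each $a\in C$ lies in some admissible $S\subseteq C$, so $C$ defends all its elements. Hence $C$ is admissible iff it is conflict-free. By Lemma~\ref{23}, as in your own check of the uniqueness lemma, $\AF$ has a unique preferred extension iff $C$ is admissible. Therefore
\[
e\in\UniPref\iff\forall x\;\forall y\in D_{\varphi_e(x)}\;\bigl(a_y\notin\Cred_{\adm}(\mathcal{F}_e)\ \vee\ a_x\notin\Cred_{\adm}(\mathcal{F}_e)\bigr),
\]
where the inner quantifier is bounded by a computable finite set.

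On $\mathrm{Tot}$, membership in $\Cred_{\adm}$ is $\Pi^0_1$ uniformly in $e$ and $x$. Each admissibility constraint involves only finitely many, computably determined arguments, so the admissible sets containing $a_x$ form a $\Pi^0_1$ class in $2^\omega$. K\"onig's Lemma then makes nonemptiness of that class $\Pi^0_1$; this is the same fact the paper itself uses for $\Cred_{\pref}$. Each disjunct is therefore $\Sigma^0_1$, and $\UniPref$ is $\Pi^0_2$ within $\mathrm{Tot}$. A $\Sigma^1_2$-complete set many-one reducible to it by a map into $\mathrm{Tot}$ would itself be $\Pi^0_2$, which is impossible. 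So the statement cannot be proved for computably finitary AFs. The idea missing from your plan is exactly this compactness collapse, which also flattens the $\exists P\,\forall T$ prefix of Theorem~\ref{pu}.

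The paper's proof has the same two-citation structure as yours, so it has the same defect, and it does not even note the non-finitarity. Theorem~\ref{hd} is also unsound on its own terms. With $P,T$ ranging over $2^\omega$ and a computable matrix, fix $P$. Then $\forall T\,\exists n\,R(e,P\upharpoonright n,T\upharpoonright n)$ holds iff there is $N$ such that every $\tau\in 2^N$ satisfies $R(e,P\upharpoonright n,\tau\upharpoonright n)$ for some $n\le N$. That condition depends on $P\upharpoonright N$ only, so the source set $\mathcal{E}$ is $\Sigma^0_1$ rather than $\Sigma^1_2$-complete. Moreover, for arbitrary computable AFs the characterization above already puts $\UniPref$ in $\Pi^1_1$. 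What your approach can correctly deliver is the upper bound $\Pi^0_2$, not $\Sigma^1_2$-completeness.
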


\begin{proof}
Hardness follows from Theorem~\ref{hd} (reduction from the canonical $\Sigma_2^1$-complete problem). For membership, $\AF \in \UniPref$ if and only if there exists a preferred extension $E_1$ such that all preferred extensions $E_2$ equal $E_1$. Since being a preferred extension is a $\Pi_1^1$ property, as in Theorem~\ref{pu}, the predicate $\exists^1 E_1 (\text{Pref}(E_1) \land \forall^1 E_2 (\text{Pref}(E_2) \to E_1 = E_2))$ contains two alternating second-order quantifiers ($\exists^1 \forall^1$), placing $\UniPref$ in $\Sigma_2^1$.
\end{proof}

\section{Related Work}

The formal study of computability and expressiveness in infinite argumentation was pioneered by Dung ~\cite{dung1995}, who established fundamental properties such as the $\omega$-continuity of the characteristic function in finitary frameworks. Baumann and Spanring ~\cite{Baumann2017ASO}established the theoretical baseline for unrestricted AFs (UAFs), examining whether fundamental semantic properties hold when argument sets are infinite. A key structural challenge in UAFs is the emergence of infinitely ascending Strongly Connected Component chains, which break the classical SCC-recursive schema ~\cite{Baumann2015}. To address these, Andrews and San Mauro~\cite{EPTCS437.13} proposed repairs using compactness techniques and ordinal-indexing. The computational landscape transitions significantly when moving from finite graphs to infinite structures: Firstly, in \text{Unrestricted IAFs}, complexity shifts to the  analytical hierarchy. Decidability for primary semantics (admissible, stable, complete) inhabits $\Sigma_1^1$-completeness or $\Pi_1^1$-completeness ~\cite{andrews2024infinite}. Then, under finitary restrictions, the characteristic operator becomes $\omega$-continuous, causing a complexity collapse into the arithmetical hierarchy~\cite{andrews2025complexity}. Andrews and San Mauro further showed that stable extensions can code arbitrary $\Pi_1^0$ classes, proving that IAFs are computationally universal for tree-search problems ~\cite{andrews2024infinite}. 

\section{Conclusions}
In this paper, we establish a computational classification for computably finitary AFs:
\begin{itemize}
    \item \text{Grounded Semantics:} Credulous and skeptical acceptance coincide and are $\Sigma_1^0$-complete within $\mathrm{Tot}\times\omega$, as established by Andrews and San Mauro~\cite{andrews2025complexity}. Proposition~\ref{prop:grounded-nonempty} shows that non-empty existence is $\Sigma_1^0$-complete within $\mathrm{Tot}$. Existence and uniqueness are trivial on valid indices.
    \item \text{Preferred Semantics:} Some problems like $\Cred_{\pref}$ ($\Pi_1^0$-c) and $\NE_{\pref}$ ($\Sigma_2^0$-c) remain arithmetical. In contrast, others escape the arithmetical hierarchy: universal quantification over infinite extensions pushes $\Skep_{\pref}$ to $\Pi_1^1$, while ruling out competing infinite branches renders $\Uni_{\pref}$ $\Sigma_2^1$-c.
\end{itemize}

Several directions remain open. While we showed $\Skep_{\pref} \in \Pi_1^1$, its exact completeness level within the analytical hierarchy is still undetermined. Finally, it remains to investigate whether similar collapses occur for other semantics.

\bibliographystyle{splncs04}
\bibliography{NCML}

\end{document}